\newtheorem{definition}{Definition}
\newtheorem{lemma}{Lemma}
\algnewcommand\LeftComment[2]{%
\hspace{#1\algindent}$\triangleright$ \eqparbox{COMMENT}{#2} \hfill %
}
\let\oldnl\nl
\newcommand{\nonl}{\renewcommand{\nl}{\let\nl\oldnl}}
\def\BibTeX{{\rm B\kern-.05em{\sc i\kern-.025em b}\kern-.08em
    T\kern-.1667em\lower.7ex\hbox{E}\kern-.125emX}}
\begin{document}
\title{Designing and Training of Lightweight Neural Networks on Edge Devices using Early Halting in Knowledge Distillation}
\author{\IEEEauthorblockN{Rahul Mishra and Hari Prabhat Gupta}
 \thanks{The authors are with the Department of Computer Science and Engineering, Indian Institute of Technology (BHU) Varanasi, India (e-mail: rahulmishra.rs.cse17@iitbhu.ac.in; hariprabhat.cse@iitbhu.ac.in;)\\
 \noindent $\bullet$ A portion of this work was presented in ACM SenSys 2020~\cite{mishra2020teacher}.}}

\maketitle

\begin{abstract} 
Automated feature extraction capability and significant performance of Deep Neural Networks (DNN) make them suitable for Internet of Things (IoT) applications. However, deploying DNN on edge devices becomes prohibitive due to the colossal computation, energy, and storage requirements. This paper presents a novel approach for designing and training lightweight DNN using large-size DNN. The approach considers the available storage, processing speed, and maximum allowable processing time to execute the task on edge devices. We present a knowledge distillation based training procedure to train the lightweight DNN to achieve adequate accuracy. During the training of lightweight DNN, we introduce a  novel early halting technique, which preserves network resources; thus, speedups the training procedure.  Finally, we present the empirically and real-world evaluations to verify the effectiveness of the proposed approach under different constraints using various edge devices.
\end{abstract}

\begin{IEEEkeywords}
 Deep neural networks, early halting, edge devices, knowledge distillation. 
\end{IEEEkeywords}

\section{Introduction}
Internet of Things (IoT) applications use sensors that generate a large amount of sensory data to perform a given task of real-time monitoring and detection~\cite{mishra2020teacher,8440758, 9130098}. In time-critical IoT applications such as fire or gas leakage detection in industrial warehouses, the sensory data processing must be completed within a specific time limit from its occurrence. Such time interval is referred to as Maximum Allowable Processing (MAP) time. Further, the edge devices are usually battery operated and smaller in size, having limited storage and processing capacity. Due to the limited storage and processing, such edge devices delayed the task execution in time-critical application~\cite{6384464,9599450,10.1145/3375877,8949724}. Therefore, it creates a vulnerable research challenge to execute a task within MAP time on a edge device.

Moreover, the high accuracy and automated feature extraction capability of Deep Neural Networks (DNN) make them suitable for IoT applications. However, the deployment of DNN on edge devices becomes prohibitive due to the excessive demand for resources~\cite{7994570}. Generally, the resources include storage and processing capacity. Next, a DNN compression technique transforms large-size DNN to lightweight for edge devices without significantly reducing performance~\cite{mishra2020survey}. A lightweight DNN has fewer parameters and can run on an edge device within limited storage. Further, lightweight DNN also reduces the inference time. Most of the existing compression techniques compress DNN up to a certain percentage without simultaneously considering the available resources of edge devices, desired accuracy, and MAP time of the task.

Knowledge Distillation (KD) is a concept that improves the performance of the lightweight DNN using the generalization ability of the large-size DNN~\cite{hinton2015distilling}. KD uses keywords teacher and student for large-size and lightweight DNN, respectively. It trains a student under the guidance of a teacher. Most of the existing KD approaches utilized the knowledge limited to the pre-trained teacher model and did not consider the knowledge from the training process of the teacher model. Different from the existing work, Zhao \textit{et al.}~\cite{9151346} employed the concept of using two teachers, \textit{i.e.,} scratch and pre-trained. Scratch teacher compels the student to follow an optimal path towards achieving final logits. A pre-trained teacher helps in avoiding the loss due to random initialization. The authors in~\cite{knowledge} proposed a framework, where a large-size DNN supervised the whole training process of lightweight DNN. The lightweight DNN shared parameters with large-size DNN to get low-level representation from the large-size. The main limitation of the existing work~\cite{knowledge,9151346} were not to considered the constraints of the edge devices while designing and training lightweight DNN. Furthermore, using multiple teachers~\cite{9151346} throughout the training of the student consumes huge resources and incurs colossal latency.

In this paper, we assume a given large-size DNN that can process a task successfully. However, it requires higher storage and processing time. Therefore, we propose an approach to design a lightweight DNN using a large-size DNN that can process the task in MAP time on edge devices. Next, to achieve higher accuracy using lightweight DNN, we present a knowledge distillation based lightweight DNN training scheme. The scheme introduces a novel early halting technique that significantly reduces training time and required resources. Specifically, we address the problem of designing and training a lightweight DNN using a given large-size DNN, where trained lightweight DNN satisfy the $\alpha$ and $\beta$ constraints of the edge devices. $\alpha$ and $\beta$ are the maximum available memory on edge devices and MAP time, respectively.

\noindent\textbf{Major contributions and novelty of the work:} \\
To the best of our knowledge, this is the first work to address the problem of designing and training lightweight DNN  by considering $\alpha$ and $\beta$ constraints of the edge devices. Along with this, the major contributions and novelty of this work are as follows:\\
\noindent $\bullet$ \textit{Transforming large-size to lightweight DNN}: The first contribution is to obtain a lightweight DNN from a large-size DNN. To do this, we dropout the unimportant units followed by reducing resource consumption from the large-size DNN using weight factorization and the minimal gated units on different layers of dropout DNN. The novel contributions lie in consideration of the number of connections in the given large-size DNN and maximum iteration runs for dropout. None of the existing work considers both in the dropout step. These novel considerations speed up the procedure of estimating the updated dropout rate. In addition, considering $\alpha$ and $\beta$ during dropout and reducing the resources (through weight factorization and minimal gated unit) makes our work different from existing work.

\noindent $\bullet$ \textit{Train the lightweight DNN}: We present a knowledge distillation based technique to train lightweight DNN (student) where, we incorporate two large-size DNN (teachers) with the same structural configuration, \textit{i.e.,} un-trained teacher and pre-trained teacher. We introduce a novel \textit{early halting technique}, where the student and the un-trained teacher are simultaneously trained up to certain (\textit{i.e.,} halting) epochs under the guidance of the pre-trained teacher. Afterwards, the student training is propagated under the guidance of the pre-trained teacher. Such a novel mechanism of early halting saves the resources; therefore, speedups the training procedure. The proposed training procedure transfers the knowledge from trained large-size DNN to lightweight DNN by minimizing the loss and improves its performance. Additionally, we propose an iterative algorithm to determine the optimal and trained lightweight model. Apart from neural architecture search~\cite{zoph2016neural}, the proposed algorithm required a limited number of steps due to $\alpha$ and $\beta$ constraints.

\noindent $\bullet$ \textit{Experimental validation}: We verify the effectiveness of the EarlyLight approach on the existing large-size DNN~\cite{9164991, xue2019deepfusion, yao2017deepsense, chen2019smartphone, janakiraman2018explaining, noori2020human}, public datasets, and edge devices. The results show that the proposed work can significantly improve performance and minimize latency. We also demonstrate real-world evaluation for locomotion mode recognition and evaluate the performance of the EarlyLight approach on parameters such as the model's size, training time, and different performance matrices.

The rest of paper is organized as follows. In the next section, we briefly discuss the literature on dropout, reducing resource requirements, and knowledge distillation to train the lightweight DNN. Section~\ref{prelim} presents the preliminary and overview of the solution for the problem addressed in this work. We propose an EarlyLight approach to train and design a lightweight DNN for edge devices in Section~\ref{propose_model}. The further two sections present the empirically and real-world evaluations. Finally, the paper concludes in Section~\ref{conclude_model}.

\vspace{-0.4cm}
\section{Background and motivation}
To better understand the motivation and background to design and train lightweight DNN for edge devices, we discuss the existing work emphasizing dropout, reducing resource requirements, and training the lightweight DNN using KD.\\
\noindent $\bullet$ \textit{Dropout in DNN:} The prior studies used random~\cite{srivastava2014dropout}, fixed~\cite{han2015learning,7837934}, or optimal~\cite{yao2017deepiot,10.5555/3305890.3305939} dropout methods for reducing resources of DNN. Authors in~\cite{srivastava2014dropout} highlighted the concept of random dropout to handle the overfitting problem in DNN. Such random dropout deteriorated the DNN structure. To mitigate the random dropout problem, Han \textit{et al.} in~\cite{han2015learning} proposed a mechanism of pruning and splicing side-by-side. The connection pruned during training can be spliced in back-propagation. They established a quadratic relation between the number of connections and neurons on the layers of DNN.  To obtain lightweight DNN, the authors in~\cite{7837934} disassembled a large DNN into small ones. They further estimated the gradients of smaller models. These gradients are compared to obtain the most reliable model. The fixed dropout~\cite{han2015learning,7837934} hampered the opportunities to improve the accuracy of the compressed DNN. Thus, the authors in~\cite{yao2017deepiot} proposed a DNN compression technique that has incorporated the estimation of optimal dropout rather than a fixed value. Further, the authors in~\cite{10.5555/3305890.3305939} exploited the concept of variational dropout only for fully connected and convolutional layers. However, they not considered recurrent layers of DNN.
 
\noindent $\bullet$ \textit{Reducing resource requirements of DNN:} The existing work reduced the resource requirements of DNN by reducing the complexity of computing units~\cite{bhattacharya2016sparsification, chauhan2018performance,gou2020knowledge, ofa, gordon2018morphnet, dai2019chamnet, yao2018fastdeepiot}, weights and biases~\cite{gordon2018morphnet, dai2019chamnet}, and filter pruning~\cite{luo2017thinet}. The authors in~\cite{bhattacharya2016sparsification} utilized the concept of layer factorization to reduce floating-point operations of fully connected layer and convolutional filter of DNN. The authors in~\cite{chauhan2018performance} performed DNN compression using weight quantization and layers pruning. The authors not considered the quantization scheme for convolutional and fully connected layers. Next, to reduce the massive resource demand and high complexity of neural architecture search. The authors in~\cite{ofa} proposed the concept of the once-for-all (OFA) network. OFA has facilitated one time operations to generate vast architectures with different specifications,  amortizing training cost. The authors decoupled training and architecture search stages with minimal accuracy compromise. The authors claimed to get a sub-network from OFA with no additional training cost. Similarly, the authors in~\cite{gordon2018morphnet} proposed the concept of iteratively shrinking and expanding DNN, utilizing sparsifying regularizer and uniform multiplicative factor, respectively. The authors named the concept as MorphNet. Apart from existing work on DNN compression, MorphNet has expanded (along with shrinkage) compressed DNN with increased available resources. MorphNet achieved performance improvement with smaller increment in training time. Another approach relying on the hardware traits for compression is presented by the authors in~\cite{dai2019chamnet}, named as ChamNet. The compressed DNN in ChamNet is obtained using an efficient evolutionary search, which takes baseline DNN, hardware traits, and energy availability as input.

\noindent $\bullet$ \textit{Training of lightweight DNN using KD:} Authors in~\cite{hinton2015distilling} proposed a KD technique, where the generalization ability of a pre-trained teacher is transferred to the student to improve its recognition performance. The logits of teacher and student are compared to estimate the distillation loss that should be minimized during the training of the student. The authors in~\cite{mishra2017apprentice} introduced the concept of simultaneous training of scratch teacher and student. It provided a soft target of logits for estimating the distillation loss between teacher and student. 
Next, Zhou \textit{et al.}~\cite{knowledge} presented a mechanism to share some initial layers of student and scratch teacher to improve the recognition accuracy. Further, the authors in~\cite{li2020few} utilized the KD technique to perform training of student using only a few samples of the dataset. The authors in~\cite{yang2020mobileda} presented a KD technique to handle domain disparity in testing data of teacher and student model. Finally, authors in~\cite{9151346} introduced the concept of pre-trained teacher and scratch teacher where, both teachers simultaneously guide student model. 

\noindent $\bullet$ \textbf{Motivation} This work is motivated by the following limitations, as noted in the existing literature. The prior work on the dropout technique in DNN~\cite{srivastava2014dropout, 7837934, han2015learning} used a fixed or random value of dropout. It leads to the pruning of important connections having lower weights, which results in significant accuracy compromise. Moreover, the work ~\cite{srivastava2014dropout,han2015learning,10.5555/3305890.3305939,babu2020single} do not guarantee the pruning of computing units in the recurrent neural network that consumes colossal resources. Next, the work in~\cite{bhattacharya2016sparsification, luo2017thinet, chauhan2018performance, yao2018fastdeepiot, lee2019neuro, liu2020layerwise} reduced the size of DNN. However, the authors did not consider the constraints for a given edge device (\textit{i.e.,} accuracy, execution time, and storage) while compressing the DNN. 

Further, to obtain a compressed (or lightweight) DNN using neural architecture search~\cite{elsken2019neural} is cost-ineffective, energy-consuming, and requires substantial resources for its execution. Therefore, it is required to develop a compression mechanism that preserves the time, energy, and resources during architecture search or training. Finally, the existing literature on knowledge distillation~\cite{hinton2015distilling, mishra2017apprentice, knowledge, chen2018distilling, li2020few, yang2020mobileda} adopted mechanisms to improve the performance of the lightweight DNN. However, none-of-the existing work emphasized reducing resources during the training of lightweight DNN and maintaining significant accuracy.
\vspace{-0.4cm}

\section{Preliminary and overview of solution}\label{prelim}
This section describes the terminologies and notations used in this work. We also discuss an overview of the solution to design a lightweight DNN from a large-size for a given edge device. Table~\ref{notation} illustrates the list of notations used in this work.

\subsection{Preliminary}
Let $\mathcal{D}$ denotes a dataset having $n$ instances and $k$ class labels, containing sensory measurements of $p$ different sensors. 
An instance $i$ of dataset $\mathcal{D}$ is denoted as $\mathbf{x}_i$, $\forall i \in \{1, \cdots, n\}$. Each instance $\mathbf{x}_i$ holds values of all $p$ sensors and corresponds to one class label $l$ of $k$ available classes, where, $l \in \{1, \cdots, k\}$. Let the large-size and lightweight DNN are denoted by $M^t$ and $M^s$, respectively.  

\begin{definition}[\textbf{Knowledge distillation}]
Knowledge distillation refers to a process for improving the performance of a lightweight DNN ($M^s$). Here, the knowledge (or generalization ability) of a large-size DNN ($M^t$) is utilized for training $M^s$, so the model $M^s$ can mimic a similar output pattern as $M^t$. This training from $M^t$ to $M^s$ is sometimes referred as \textbf{student-teacher training}~\cite{hinton2015distilling} in knowledge distillation.
\end{definition}

The training of student $M^s$ using knowledge distillation from teacher $M^t$ incorporates the comparison of their logits. The logits are the output features vector obtained at one layer before the softmax layer (output layer). Let $\mathbf{t}_i$ denote the logit vector of $M^t$ for $i^{th}$ training instance of dataset $\mathcal{D}$, where, $1\leq i \leq n$. Let $t_{ij}$ ($1\leq j \leq k$) is an element of $\mathbf{t}_{i}$, which can be estimated as $t_{ij}=w_{ij} x_{ij} + b_{j}$, where, $x_{ij}\in X$, $w_{ij}\in W^T $, and $b_{j}\in \mathbf{b}$ represent an element of feature matrix, weight matrix, and bias vector of teacher model, respectively. Similarly, we can estimate student logit vector $\mathbf{s}_i$ for $i^{th}$ training instance of $\mathcal{D}$. Next, we estimate distance between two vector using distance function $\delta(\mathbf{t}_i, \mathbf{s}_i)$ as: $\delta(\mathbf{t}_i, \mathbf{s}_i)=\|\mathbf{t}_i-\mathbf{s}_i\|_2^2$, where, $\|\cdot\|_2^2$ represents squared $l2$ norm. Further, the distance function $\delta(\mathbf{t},\mathbf{s})$ for all $n$ training instance in $\mathcal{D}$ is: $\delta(\mathbf{t},\mathbf{s})=\sum_{i=1}^n \|\mathbf{t}_i-\mathbf{s}_i\|_2^2$. The main objective of knowledge distillation is to minimize the distance function $\delta(\cdot)$ by training the student under the guidance of teacher for sufficient number of epochs.

\begin{definition}[\textbf{Logits}]
Logits refer to the feature vector generated by a DNN prior to the softmax layer. It is also termed as a non-normalized prediction vector of a DNN. These logits pass as input to the softmax layer for generating prediction probability against a testing instance. 
\end{definition}

\begin{definition}[\textbf{Maximum Allowable Processing time}]
A task in time-critical applications must be processed within a pre-defined time interval. Such time interval is known as Maximum Allowable Processing (MAP) time. The MAP time for a given task is denoted by $\beta$. Let an edge device processes $x$ FLOPs per unit time.  A task of $y$ FLOPs can successfully process on an edge device if $x \times y \leq \beta$.
\end{definition}

\subsection{Problem statement and overview of solution} 
Consider an edge device that can provide a maximum $\alpha$ space to store and process a task of $\beta$ MAP time. In this work, we investigate the following problem: \textit{how to design a lightweight DNN using a given large-size DNN  such that the trained lightweight DNN can successfully process a task on an edge device with given $\alpha$ and $\beta$ constraints?}

To design a lightweight DNN from a given large-size DNN, we propose the  EarlyLight approach that first designs a lightweight DNN for edge devices. The approach trains the lightweight DNN using the knowledge distillation technique. Section~\ref{new1} and Section~\ref{new2} present procedures to design a lightweight DNN from large-size using dropout and reducing the parameters of the computationally complex units, respectively. While designing lightweight DNN, we consider the given constraints $\alpha$ and $\beta$ of the edge device. We next present a procedure to train the designed lightweight DNN incorporating knowledge of pre-trained and un-training large-size DNN. We further introduce a novel early halting technique to accelerate the training of lightweight DNN while reducing the training resources and achieving adequate accuracy, discussed in Section~\ref{model_training}. Finally, we present an algorithm that uses all different procedures and considers large-size and lightweight DNN as input and output, respectively, to design and train lightweight DNN, \textit{i.e.}, processing the task within the constraints $\alpha$ and $\beta$ of edge device with high accuracy. 

 \begin{table}[h] 
	\caption{List of notations used in this work.}
	\centering
	\small
	\begin{tabular}{ >{\centering\arraybackslash}m{0.6cm} m{3.1cm} | >{\centering\arraybackslash}m{0.6cm} m{2.8cm}}
        
    \hline \textbf{\begin{tabular}[c]{@{}c@{}}Symbol \end{tabular}} & \multicolumn{1}{c}{\textbf{Description}} & \textbf{\begin{tabular}[c]{@{}c@{}}Symbol \end{tabular}} & \multicolumn{1}{c}{\textbf{Description}} \\ 
        
	\hline $\mathcal{D}$ & Dataset  & $n$ & Instances in $\mathcal{D}$\\
    $k$ & Number of classes in $\mathcal{D}$ & $W_i$ & Weight at layer $i$ \\
    $Q_i$ & Neurons at layer $i$ & $M^t$ & Teacher model\\
    $M^s$ & Student model & $\mathbf{\Pi}^s$ & Student classifier\\
    $\mathcal{L}_{DL}$ & Distillation loss & $\mathcal{L}_{CE}$ & Cross entropy loss \\
    $\mathcal{L}_{AL}$ & Attention loss & $\mathbf{x}_{te}$ & Testing instance \\
    $y_{te}$ & Testing label& $d$ & Dropout \\
    \hline
\end{tabular}
	\label{notation}
\end{table}

\section{EarlyLight: \underline{Light}weight neural networks on edge devices using \underline{Early} halting}\label{propose_model} 
This section proposes an approach to design and train lightweight DNN on edge devices using early halting in knowledge distillation, acronymed as \textit{EarlyLight}. The approach comprises mainly two phases: 1) designing of lightweight DNN for edge device and 2) training of the designed DNN. The designing phase involves the transformation of a given large-size DNN into a lightweight, considering the $\alpha$ and $\beta$ constraints of the edge device. We assume that a dataset $\mathcal{D}$ and a large-size DNN $M^t$ are given prior to this transformation. Later, the training phase introduces the technique of early hating in KD. The halting simultaneously reduces the training time and improves the accuracy of the designed lightweight DNN. Fig.~\ref{block} illustrates the overview of the EarlyLight approach.

\begin{figure}[h]
 \centering
 \includegraphics[scale=0.92]{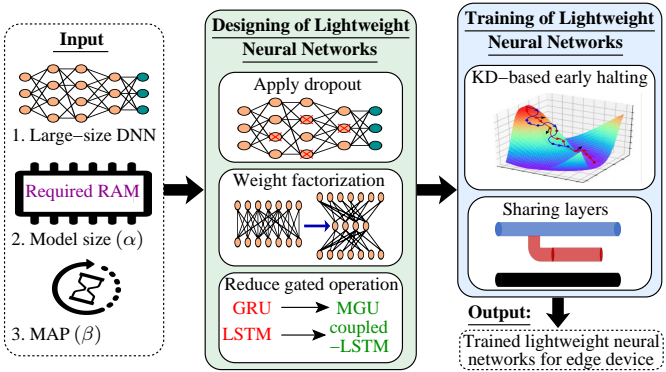}
 \caption{An overview of EarlyLight approach. MAP: Maximum Allowable Processing, GRU: Gated Recurrent Unit, MGU: Minimal Gated Unit, LSTM: Long Shot Term Memory, KD: Knowledge Distillation.}
 \label{block}
\end{figure}

\subsection{Designing of lightweight DNN}\label{phase1}
This section describes the technique of designing lightweight DNN for edge devices, satisfying $\alpha$ and $\beta$ constraints. We initially assume a large-size DNN ($M^{t}$) that is transformed into a lightweight DNN. First, we define the expression for execution time and memory consumption of lightweight DNN. Using the defined expressions, we deduce an optimization problem to minimize memory consumption and execution time for given constraints $\alpha$ and $\beta$, respectively. We next introduce the technique of estimating optimal dropout to reduce the resource requirement of $M^{t}$, which results in the dropout DNN. Later, the resources of the dropout DNN is minimized via weight factorization (convolutional and fully connected layers) and reduction in gated operations (recurrent layers). The resultant DNN is a lightweight neural network that satisfies the edge device's constraints $\alpha$ and $\beta$.      

Let $b_e$ and $e_m$ denote the memory and time requirements for executing single FLOP, respectively. Such $b_e$ and $e_m$ depend on the hardware capacity of the edge devices. We deduce the expression for temporary memory consumption ($T_{mem.}$) and execution time ($T_{exec.}$) to run the lightweight DNN on a given edge device. The expressions are given as:

\begin{align}\nonumber
 T_{mem.}=b_e\sum_{i=1}^{L}F_i,\quad T_{exec.}=e_m\sum_{i=1}^{L}F_i, 
\end{align}
where, $F_i$ denotes number of FLOPs for layer $i$ of regular large-size or reduced lightweight model, given in Table~\ref{fp1}. 

Finally, the objective function of a lightweight DDN $M^s$ for a given edge device with the average available space $\alpha$ and MAP time $\beta$ is given as:
\begin{align} \nonumber
 \min \text{  }  &\Omega T_{mem.} + (1- \Omega) T_{exec.}\\ \nonumber
 \textit{s.t., } 
 \mathbf{c_1:} \quad & \mathbb{T}_{mem.} \leq \alpha,\\ \label{opti}
  \mathbf{c_2:} \quad &\mathbb{T}_{exec.}\leq \beta,
\end{align}
where, $\Omega$ ($0 \leq \Omega \leq 1$) is used to neutralize the mismatch between units of execution time and memory consumption. Solving Eq.~\ref{opti} is tedious as the available resources on the edge devices changes dynamically. Therefore, we use a heuristic-based solution to apply dropout on large-size DNN and further reduce the resources requirement of dropout DNN. The resultant near-optimal lightweight DNN fits on the edge device, satisfying constraints in Eq.~\ref{opti}.

\subsubsection{Applying dropout on the large-size DNN}\label{new1}
We first apply the dropout over given large-size DNN ($M^{t}$) to curtail unimportant or inferior connections. The resultant dropout DNN is equivalent to a lightweight DNN with weights scaled with a given dropout rate. The dropout over $M^{t}$ reduces the required memory and execution time. Moreover, high and low dropout rates cause under-fitting and over-fitting of the DNN, respectively. A low dropout rate requires considerable resources with minimal or no accuracy compromise. However, high dropout rate leads to substantial accuracy compromise. This work estimates the optimal dropout that best suits our resources and accuracy requirements. To initialize the selection of optimal dropout, we set a dropout rate (denoted by $d$) preferably with a higher value like $d=0.5$ for hidden units and $d=0.8$ for input units~\cite{srivastava2014dropout}. Let $Q_b$ and $Q_a$ denote the number of connections, before and after dropout, respectively. Let $\max_{iteration}$, and $c$ are the maximum iteration runs for the dropout and a hyper-parameter, respectively. The updated dropout rate is given as follows: $d' \leftarrow d \times \max\left \{\sqrt{\frac{Q_b}{Q_a}},\left(1-\frac{iteration}{c \times \max_{iteration}}\right)\right \}$.  Additionally, when we consider the number of connections and maximum iterations of dropout apriori~\cite{babu2020single}. It speeds up the estimation of dropout rate. 

Furthermore, the initial value of $Q_b$ is the nothing but the number of connections in $M^t$. The $M^t$ can be represented as $\{W_i, Z_i:1\leq i \leq L\}$, where, $Z_i$ is a binary matrix that indicates the state of the network connection at layer $i$. It holds the information about a weight that retains or discarded on a given dropout. The binary matrix $Z_i$ is determined using discriminative function $f(\cdot)$ as,  $Z_i^{(j,k)}= f(W_i^{(j,k)}), \forall (j,k)\in \mathcal{I}$, where, $\mathcal{I}$ denotes the set of indices of $W_i$ at layer $i$. The function $f(\cdot)$ generates output $1$ if connection $Q_{i}^{j,k}$ remains after training and $0$ otherwise. The steps involved in the selection of optimal dropout is illustrated in Procedure~$1$. The weight $W$ in \textit{SGD\_function}() is updated using gradient descent with learning rate $\eta$. $\mathcal{L}(\cdot)$ is a cross-entropy loss associated with the DNN that captures discrepancy between predicted output and actual output.

\SetAlFnt{\small}
\begin{procedure}[h]
\label{proc1}
\caption{() \textbf{1: Applying dropout on large-size DNN.}}  
\KwIn{DNN $M^t$ $\{W_{i}, Z_i: 1\leq i \leq L \}$ with connection $Q_a$, learning rate $\eta$, loss of $M^t$ as $\mathcal{L}$, $l\in L$ layers upon which dropout is applied, hyper-parameter $c$ ;}
\KwOut{Dropout model with reduced connections $Q_b$\;}
\nonl Initialize $d^{(0)}\leftarrow 0.5$, $i\leftarrow 0$, $Q_a\leftarrow$ connections of $M^t$, $iteration\leftarrow 0$, $\max_{iteration}$\;
\SetKwRepeat{Do}{do}{while}
\Do{(${Loss}_i \le \mathcal{L}$)}{
Dropout $d^{(i)}$ on $l$ layers of $M^t$ with $Q_a$ connections\;
Estimate loss: $Loss_i \leftarrow$\textit{SGD\_function} ($W_{i}, Z_i$)\;
$Q_b \leftarrow$ reduced connections after dropout\;
$i \leftarrow i+1$\;
 Updating dropout using following formula:\\
$d^{(i)} \leftarrow d^{(i-1)} \times \max\left \{\sqrt{\frac{Q_b}{Q_a}},\left(1-\frac{iteration}{c \times \max_{iteration}}\right)\right \}$\;
$Q_a \leftarrow Q_b$ /*Updating connections*/\;
}
\smallskip
\Return Dropout model with connections $Q_b$\;
\smallskip
\nonl \textbf{Function} \textit{SGD\_function} ($W_i, Z_i$)\\
\nonl \hspace{10pt}\textbf{begin} \nonumber \\
\nonl \hspace{10pt} $\max_{iter} \leftarrow \frac{\|\mathcal{D}\|}{\|batch-size\|}$ , $iter$ $\leftarrow 1$\;
\nonl \hspace{20pt} \textbf{while} {$iter$ $\leq$ $\max_{iter}$} \textbf{do} \\
\nonl \hspace{30pt} Select a batch from training dataset $\mathcal{D}$ \;
\nonl \hspace{30pt} Perform forward propagation \;
\nonl \hspace{30pt} $\textit{Loss}\leftarrow$Estimate loss using $\mathcal{L}(W_i, X_i)$ \;
\nonl \hspace{30pt} Perform backward propagation and generate $\Delta \mathcal{L}$\;
\nonl \hspace{30pt} Update $W_i$ and $Z_i$ \;
\nonl \hspace{20pt}\Return ${Loss}$ \;
\nonl \hspace{10pt}\textbf{end}
\end{procedure}

\subsubsection{Reducing resources of dropout DNN}\label{new2}
Next, we describe the technique to reduce the resource requirements of the dropout DNN. This reduction enforces the designing of lightweight DNN that satisfies the $\alpha$ and $\beta$ constraints of the edge device. Apart from the prior work to reduce the resources of either convolutional or recurrent layers. This work introduces the technique to shrink the resource requirements of DNN layers, including convolutional, fully connected, and recurrent (Long Short Term Memory (LSTM) or Gated Recurrent Unit (GRU)). We apply weight factorization to reduce the resource requirements of the convolutional and fully connected layers.
Further, we eliminate the gates of the recurrent units to suppress the resources of LSTM and GRU. Procedure~$2$ summarizes the steps involved in reducing the dropout DNN, satisfying $\alpha$ and $\beta$ constraints of the edge devices.

Let $I_i$ and $O_i$ denote input and output dimensions of layer $i$ for dropout DNN, where $i$ may be Convolutional (Conv), Fully Connected (FC), LSTM, or GRU. The filter size, input channels, output channels of a convolutional layer $i$ is represented as $f_i \times g_i$, $h_i$, and $w_i$, respectively. Further, $s$, $L_{g}$, and $G_{g}$ denote step count, LSTM gates and GRU gates, respectively. The parameters ($P_i$) and required FLOPs ($F_i$) at layer $i$ of DNN are given in Table~\ref{fp1}(a).

\begin{table}[h]
\centering
\caption{Number of parameters and FLOPs at layer $i$.}
 \resizebox{.49\textwidth}{!}{
\begin{tabular}{|l@{}|c@{}|c|}
\multicolumn{3}{c}{(a) \textbf{\small Regular layer} }\\ \hline
\textbf{Layer}     &            \textbf{Parameter ($P_i$)}                                                     &                                \textbf{FLOPs (${F}_i$)}                                                           \\ \hline
Conv      & $(I_i\times (f_i \times g_i) \times O_i) + O_i$  &  $(f_i\times g_i) \times (I_i \times O_i) \times (h_i \times w_i)$        \\ \hline
FC        &       $(I_i \times O_i) + O_i$                   &    $(2I_i-1) \times O_i$                                                   \\ \hline
LSTM      &       $L_{g} O_i \times  (I_i + O_i +1)$                                                          &               $(2L_{g}O_i(I_i + O_i) +4O_i)s$                     \\ \hline
GRU & $G_{g} O_i\times (I_i + O_i +1)$ & $(2G_{g}O_i\times(I_i+ O_i) +5O_i)s$   \\ \hline 
\multicolumn{3}{c}{}\\ 
\multicolumn{3}{c}{(b) \textbf{\small After factorization and using minimal gated unit}}\\ \hline
Conv      &  $(I_i\times (f_i \times g_i) \times R_i) + R_i$ & $((f_i\times g_i) \times (h_i \times w_i)+1 + O_i)R_i$         \\ \hline
FC         &  $(I_i \times R_i) + R_i$                        & $((2I_i-1) + O_i) \times R_i$                                         \\ \hline
LSTM       &  $L_{g}{'} O_i \times  (I_i + O_i +1)$           &               $(2L_{g}{'}O_i^l \times(I_i + O_i) +4O_i)s$              \\ \hline
GRU        & $G_{g}{'} O_i\times (I_i + O_i +1)$              & $(2G_{g}{'}O_i\times(I_i+ O_i) +5O_i)s$                                \\ \hline
\end{tabular}
}
\label{fp1}
\end{table}

\textbf{(a)} \textit{Weight factorization:} 
This paper uses the weight factorization technique~\cite{bhattacharya2016sparsification} to reduce the parameters and FLOPs involved in convolutional and fully connected layers. The weight factorization technique introduces an intermediate multiplexing layer between two layers of the dropout DNN. This factorization of layers (convolutional or fully connected) reduces the computation requirement: if the size of the intermediate layer (denoted by $R_i$)  $\ngeq \frac{I_i\times O_i}{I_i+O_i}$~\cite{bhattacharya2016sparsification}. $R_i$ is obtained using a heuristic approach, where we start our factorization with $R_i<\frac{I_i\times O_i}{I_i+O_i}$ and estimate weight reconstruction error. Next, we iteratively decrease $R_i$ and estimate reconstruction error at each iteration. Finally, we obtain $R_i$ with minimum reconstruction error upon successful execution of this heuristic approach. The number of parameters and FLOPs at layer $i$ after the weight factorization are given in Table~\ref{fp1}(b).

\textbf{(b)} \textit{Reducing gated operations:} The parameters and FLOPs involved in the LSTM and GRU directly depend upon the gated operations. Therefore, we use the concept of MGU inspired from~\cite{zhou2016minimal} to reduce the resource requirements of DNN. MGU relies on the basic principle that the gated units play a significant role in achieving higher performance, whereas incorporating several gated operations increases computation complexity. Hence, a wiser selection of gates that persist in the network leads to comparable accuracy and low execution complexity. Let $L_{g}{'}$  and $G_{g}{'}$ denotes the reduced gates in LSTM and GRU, respectively. We replace LSTM with coupled LSTM and GRU with MGU to reduce the gated operations. 

\SetAlFnt{\small}
\begin{procedure}[ht]
\caption{() \textbf{2: Reducing resource of dropout DNN.}}
\label{proc2}
\KwIn{Dropout model with $Q_b$ connections\;}
\KwOut{Compressed model $M^s$\;}
Initialization: $i\leftarrow 1$, $j\leftarrow 0$, dropout applied on $l$ layers\;
\SetKwRepeat{Do}{do}{while}
\Do{(\textcolor{black}{Eq.~\ref{opti}} is not satisfied)}{
\For {$i,j \in \{1\leq i,j \leq l\}$ and $j\leftarrow i+1$\;}{
\If{$i$ = \text{Conv.} \textbf{or} $i$ = \text{FC} }{
Estimate reduced dimension $R_i$\;
Perform factorization by inserting a layer ${i}{'}$ (size $R_i$) between layers $i$ and ${j}$\;
Estimate parameters and FLOPs (Table~\ref{fp1}(b))\;
}
\If{$i$=\text{LSTM}}{
Replace the LSTM cells with coupled LSTM\;
Estimate parameters and FLOPs (Table~\ref{fp1}(b)) with updated gate $L_{g}{'}$ \;
}
\Else{
Replace the GRU cells with MGU\;
Estimate parameters and FLOPs (Table~\ref{fp1}(b)) with updated gate $G_{g}{'}$ \;
}
Solve optimization problem in \textcolor{black}{Eq.~\ref{opti}}\;
$i\leftarrow i+1$
}}
\Return $M^s$ with reduced parameters and FLOPs\;
\end{procedure}

\subsection{Training of lightweight DNN}\label{model_training}
This section covers the details about the training of lightweight DNN (or student) obtained in Section~\ref{phase1}. The student ($M^s$) is trained using the knowledge distillation technique, where we involve two teachers, \textit{i.e.,} a pre-trained large-size DNN ($M^{tr}$) and an un-trained large-size DNN ($M^{te}$). Further, we introduce the early halting technique for reducing the resource requirements for training $M^s$. Finally, this section derives the expression for optimal loss functions involved in the training. The main components of the training are: 1) knowledge distillation using early halting technique and 2) sharing layers of student and teacher. 

\subsubsection{Training $M^s$ using knowledge distillation with early halting}
Knowledge distillation from $M^{tr}$ to $M^{s}$, while training of $M^{s}$ on raw data improves its generalization ability. This improvement helps in enhancing the performance of $M^{s}$. In KD, the fine-tuned logits of $M^{tr}$ is compared against the logits of $M^{s}$, which are generated from raw data. Thus, the logits of $M^{tr}$ become a hard target for $M^{s}$. It also hinders the sufficient improvement in the performance of $M^{s}$. Thus, it could be beneficial to train an un-trained teacher ($M^{te}$) alongside $M^{s}$, where logits of $M^{te}$ is a soft target for $M^{s}$ . It provides soft-target during logits comparison. $M^{tr}$ and $M^{te}$ have same structural configuration. However, the un-trained teacher may sometimes undergo wrong random initialization, which leads to performance deterioration of $M^{s}$. 

Moreover, if we use both $M^{tr}$ and $M^{te}$ during training of $M^{s}$ then the problems, \textit{i.e.,} hard logits target and performance diminution due to random initialization is solved~\cite{9151346}. It also leads to significant improvement in the performance of $M^{s}$. Despite the successful training of $M^{s}$ due to appropriate matching of student and teachers logits, the simultaneous consideration of $M^{s}$, $M^{tr}$ and $M^{te}$ during training of $M^{s}$ demands colossal resources. As it requires three models to be trained, \textit{i.e.,} $M^{tr}$ followed by $M^{te}$ and $M^{s}$. 

We introduce the technique for \textit{early halting} of $M^{te}$ training after halting epoch $h$, where $h < E$ and $E$ denotes the total required epochs for the training, to reduce the resource consumption during training of $M^{s}$. The early halting saves the device's resources during training of $M^{s}$ and therefore fasten the training. Hereafter, the training of $M^{s}$ continues only under the guidance of trained $M^{tr}$, as shown in Fig.~\ref{model}(a). The early halting technique uses cross-entropy loss $\mathcal{L}_{CE}(\cdot)$, attention loss $\mathcal{L}_{AL}(\cdot)$, and distillation loss $\mathcal{L}_{DL}(\cdot)$, as shown in Fig.~\ref{mgu1}. The performance of $M^{s}$ can be improved in the supervision of trained $M^{tr}$ that compares output at each epoch. The comparison is carried out using attention loss between $M^{s}$ and $M^{tr}$. The combined loss ($\mathcal{L}_{comb}(\cdot)$), which operates during training of $M^{s}$ and $M^{te}$, is given as follows: 

\begin{small}
\begin{align}\label{comb1}
\mathcal{L}_{comb}(\cdot) =\left\{\begin{matrix}
 \lambda_1 \mathcal{L}^{s}_{CE}(\cdot)  + \lambda_2 \mathcal{L}_{AL}(\cdot) 
 +\lambda_3 \mathcal{L}_{DL}(\cdot) + \lambda_4\mathcal{L}^{te}_{CE}(\cdot), 
 \\\quad \text{till training of un-trained $M_i$},\\
\lambda_1 \mathcal{L}_{CE}^s(\cdot) + \lambda_2 \mathcal{L}_{AL}(\cdot) + \lambda_3 \mathcal{L}_{DL}(\cdot). 
\end{matrix}\right.
\end{align} 
\end{small} 
where $\lambda_1$, $\lambda_2$, $\lambda_3$, and $\lambda_4$ are the fractional contribution of different loss functions, $0 < \{ \lambda_1, \lambda_2, \lambda_3, \lambda_4\} \le 1$. We only optimize the combined loss associated with $M^{s}$, as the contribution of the loss of untrained $M^{te}$ is uniform throughout the training of $M^{s}$. The early halting optimizes the following problem:
\begin{subequations}\label{Main_optimization}
\begin{align}
 \min\text{  } & \mathcal{L}_{comb}^{s}(\cdot)\\
 \textit{s.t.,} \hspace{.3cm} &\hspace{.1cm}  \lambda_1 + \lambda_2 + \lambda_3 =1,\\
  &\hspace{.1cm} 0 < \{ \lambda_1, \lambda_2, \lambda_3\} <1.
\end{align}
\end{subequations}

\begin{lemma}\label{lemma}
The optimization problem in Eq.~\ref{Main_optimization} holds a convex optimal solution.  
\end{lemma}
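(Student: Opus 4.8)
The plan is to verify the two ingredients that together certify a convex program: convexity of the objective $\mathcal{L}_{comb}^{s}(\cdot)$ and convexity of the feasible set carved out by the constraints of Eq.~\ref{Main_optimization}. I would first argue that each of the three constituent losses is convex in the quantity it compares. The distillation term $\mathcal{L}_{DL}$ is built from $\delta(\mathbf{t}_i,\mathbf{s}_i)=\|\mathbf{t}_i-\mathbf{s}_i\|_2^2$, a squared Euclidean norm, and a sum of squared norms is convex; similarly the attention loss $\mathcal{L}_{AL}$, being a norm-based discrepancy between activation maps, is convex. The cross-entropy term $\mathcal{L}_{CE}^{s}$ is the standard log-sum-exp minus a linear term (composed with the softmax), a textbook convex function of the logits. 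I would state explicitly that convexity is claimed with respect to these argument vectors (the pre-softmax logits and the attention features), so that each composition rests on solid ground.

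Next I would invoke the closure property that a nonnegative weighted combination of convex functions is convex. Because the constraints force $\lambda_1,\lambda_2,\lambda_3>0$, the combined loss $\mathcal{L}_{comb}^{s}=\lambda_1\mathcal{L}_{CE}^{s}+\lambda_2\mathcal{L}_{AL}+\lambda_3\mathcal{L}_{DL}$ inherits convexity from its summands. If one additionally wishes to treat the $\lambda_i$ themselves as variables, the objective is affine in $(\lambda_1,\lambda_2,\lambda_3)$ and hence convex in that direction too, so the joint problem remains convex either way.

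For the feasible set I would observe that $\lambda_1+\lambda_2+\lambda_3=1$ is an affine equality (a hyperplane), while each $0<\lambda_i<1$ is a pair of strict linear inequalities (an open box); the intersection of a hyperplane with a box is convex, here an open $2$-simplex. A convex objective minimized over a convex set is by definition a convex optimization problem, from which the claim follows by standard theory: the minimizer set is convex and every local optimum is global. Strict convexity of the squared-norm distillation term then pins the minimizer down to a single point, giving the stronger statement of a unique convex optimum.

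The step I expect to be the main obstacle is the convexity of $\mathcal{L}_{CE}^{s}$ (and, to a lesser degree, $\mathcal{L}_{AL}$): cross-entropy is convex in the logits but generally nonconvex once composed with the nonlinear map from raw network weights to logits. I would therefore be careful to assert convexity in the correct variable, namely the pre-softmax logits and attention features rather than the network parameters, and to restrict the lemma's scope accordingly; phrasing it loosely over the full parameter space would overreach and break the argument.
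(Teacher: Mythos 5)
Your proposal is correct, and it reaches the lemma by a genuinely different route than the paper. The paper's proof is a term-by-term calculus verification: it differentiates each loss twice with respect to the scalar input entries $x_{ij}$ (through which the logits $w_{ij}x_{ij}+b_j$ are affine), obtains explicit second derivatives for the cross-entropy, attention, and distillation terms, observes that each is positive (softmax probabilities lying in $(0,1)$ for the cross-entropy term, squared weights for the other two), and concludes that the $\lambda$-weighted sum of positive second derivatives is positive. You instead assemble the result from closure properties: log-sum-exp convexity of cross-entropy in the logits, convexity of squared Euclidean norms for the attention and distillation terms, and preservation of convexity under nonnegative combinations. Your route buys two things the paper's argument omits: (i) an explicit verification that the feasible set $\{\lambda_1+\lambda_2+\lambda_3=1,\ 0<\lambda_i<1\}$ is convex (the paper never addresses the constraints, even though they define the optimization problem in Eq.~\ref{Main_optimization}), and (ii) a careful statement of the variable in which convexity is claimed — logits and attention features rather than network parameters — which the paper leaves implicit by fixing $w_{ij}$ and differentiating in $x_{ij}$. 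The paper's computation, in exchange, is self-contained and exhibits precisely where positivity comes from without appealing to outside facts. One caution on your final remark: strict convexity of the distillation term in the student logits does not by itself give a unique minimizer of the full problem (the other terms and the $\lambda$-direction, in which the objective is only affine, can leave the minimizer set larger than a point), so you should drop or weaken the uniqueness claim; it is not needed for the lemma as stated.
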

\begin{proof}
 \noindent We determine the first order and second order derivative of $\mathcal{L}_{comb}^s(\cdot)$ to prove convexity of the optimization problem given in Eq.~\ref{Main_optimization}. Here, if second-order derivative of $\mathcal{L}_{comb}^s(\cdot)$ is positive, we can conclude the optimization problem is convex.
 
\begin{align}\label{gradient}
\hspace{-0.2cm}\frac{\mathrm{d} \mathcal{L}_{comb}^s(\cdot)}{\mathrm{d} x_{ij}} = \lambda_1\underset{\mathbf{T_1}}{\underbrace{\frac{\mathrm{d} \mathcal{L}_{CE}^s(\cdot)}{\mathrm{d} x_{ij}}}} + \lambda_2\underset{\mathbf{T_2}}{\underbrace{\frac{\mathrm{d} \mathcal{L}_{AL}(\cdot)}{\mathrm{d} x_{ij}}}} + \lambda_3\underset{\mathbf{T_3}}{\underbrace{\frac{\mathrm{d} \mathcal{L}_{DL}(\cdot)}{\mathrm{d} x_{ij}}}}.
\end{align}

\noindent $\textbf{(a)}$ Estimating the derivative of terms $\mathbf{T_1}$:
\begin{small}
\begin{align}\nonumber
\hspace{-0.2cm}\frac{\mathrm{d} \mathcal{L}_{CE}^s(\cdot)}{\mathrm{d} x_{ij}} & = -\frac{1}{n} \frac{\mathrm{d}}{\mathrm{d} x_{ij}}\sum_{i=1}^{n}\sum_{j=1}^{k}\mathbbm{1}(\cdot)  \log \frac{e^{(w_{ij}x_{ij} + b_j)}}{\sum_{j=1}^{k}e^{(w_{ij}x_{ij} + b_j)}},\\  \label{fce}
&= -\frac{\pmb{1}(\cdot)w_{ij}}{n} \times \Big\{ 1-\frac{e^{(w_{ij}x_{ij} + b_j)}}{\sum_{j=1}^{k}e^{(w_{ij}x_{ij} + b_j)}} \Big\}.\\ \label{sce}
 \hspace{-0.3cm}\frac{\mathrm{d}^2 \mathcal{L}_{CE}^s(\cdot)}{\mathrm{d}^2 x_{ij}} &=  \frac{\mathbbm{1}(\cdot)w_{ij}^2e^{(w_{ij}x_{ij} + b_j)}}{n\sum_{j=1}^{k}e^{(w_{ij}x_{ij} + b_j)}} \Big( 1-\frac{e^{(w_{ij}x_{ij} + b_j) }}{\sum_{j=1}^{k}e^{(w_{ij}x_{ij} + b_j)}} \Big).
\end{align}
\end{small}

\noindent $\textbf{(b)}$ Determining the derivative of terms $\mathbf{T_2}$:
\begin{small}
\begin{align}\nonumber
\frac{\mathrm{d} \mathcal{L}_{AL}(\cdot)}{\mathrm{d} x_{ij}} &= \frac{1}{n}\frac{\mathrm{d}}{\mathrm{d} x_{ij}}\sum_{i=1}^{n}\sum_{j=1}^{k}\Big\| \frac{\mathscr{T}^t(\mathcal{F}^t_{ij})}{||\mathscr{T}^t(\mathcal{F}^t_{ij})||} - \frac{\mathscr{T}^s(\mathcal{F}^s_{ij})}{||\mathscr{T}^s(\mathcal{F}^s_{ij})||} \Big \|_2^2.
\end{align}
\end{small}

We consider the value of transformation function and magnitude of teacher as constant because these values do not vary during student training. Similarly, magnitude of student transformation $||\mathscr{T}^s(\mathcal{F}^s_{ij})||$ is constant. Further, $\mathcal{F}^s_{ij}=w_{ij}x_{ij}+b_j$, therefore, we obtain first order derivative, as

\begin{small}
\begin{align}\label{fdl}
\hspace{-0.3cm}\frac{\mathrm{d} \mathcal{L}_{AL}(\cdot)}{\mathrm{d} x_{ij}}&=\frac{-2}{n}\Big(\frac{\mathscr{T}^t(\mathcal{F}^t_{ij})}{||\mathscr{T}^t(\mathcal{F}^t_{ij})||} - \frac{\mathscr{T}^s(\mathcal{F}^s_{ij})}{||\mathscr{T}^s(\mathcal{F}^s_{ij})||}\Big)\frac{w_{ij}}{||\mathscr{T}^s(\mathcal{F}^s_{ij})||}.\\ \label{sdl}
\frac{\mathrm{d}^2 \mathcal{L}_{AL}(\cdot)}{\mathrm{d}^2 x_{ij}}&=\frac{2}{n}\Big(\frac{w_{ij}}{||\mathscr{T}^s(\mathcal{F}^s_{ij})||}\Big)^2.
\end{align}
\end{small}

\noindent $\textbf{(c)}$ Estimating the derivative of terms $\mathbf{T_3}$: Similar to that of consideration for term $\mathbf{T_2}$, here also we assume that the part  of trainee is constant with respect to student.

\begin{small}
\begin{align}
 \frac{\mathrm{d} \mathcal{L}_{DL}(\cdot)}{\mathrm{d} x_{ij}} &= \frac{2}{n}(\mathbf{s}-\mathbf{z})w_{ij},\\\label{sds}
 \frac{\mathrm{d}^2 \mathcal{L}_{DL}(\cdot)}{\mathrm{d}^2 x_{ij}} &= \frac{2}{n}(w_{ij})^2.
\end{align}
\end{small}

From Eq.~\ref{sce}, Eq.~\ref{sdl}, and Eq.~\ref{sds}, we can obtain second order derivative of $\mathcal{L}_{comb}^s(\cdot)$ as

\begin{small}
\begin{align}
\hspace{-0.2cm}\frac{\mathrm{d}^2 \mathcal{L}_{comb}^s(\cdot)}{\mathrm{d}^2 x_{ij}} = \lambda_1\frac{\mathrm{d}^2 \mathcal{L}_{CE}(\cdot)}{\mathrm{d}^2 x_{ij}} + \lambda_2\frac{\mathrm{d}^2 \mathcal{L}_{AL}(\cdot)}{\mathrm{d}^2 x_{ij}} + \lambda_3\frac{\mathrm{d}^2 \mathcal{L}_{DL}(\cdot)}{\mathrm{d}^2 x_{ij}}.
\end{align}
\end{small}

As $\frac{\mathrm{d}^2 \mathcal{L}_{CE}(\cdot)}{\mathrm{d}^2 x_{ij}}>0$, $\frac{\mathrm{d}^2 \mathcal{L}_{AL}(\cdot)}{\mathrm{d}^2 x_{ij}}>0$, and $\frac{\mathrm{d}^2 \mathcal{L}_{DL}(\cdot)}{\mathrm{d}^2 x_{ij}}>0$, therefore, $\frac{\mathrm{d}^2 \mathcal{L}_{comb}^s(\cdot)}{\mathrm{d}^2 x_{ij}}>0$, 
which proves the convexity of combined loss $\mathcal{L}_{comb}^s(\cdot)$. 
\end{proof}
 
\begin{figure}[h]
    \vspace{-0.3cm}
     \centering
    \includegraphics[scale=0.90]{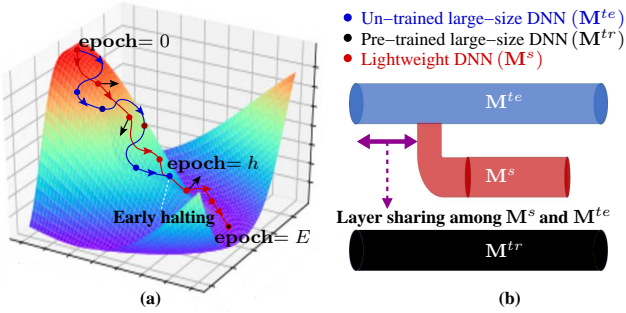}
     \caption{Training of lightweight DNN ($M^{s}$) : (a) early halting of $M^{te}$ training and (b) layer sharing among $M^s$ and $M^{te}$.}
     \label{model}
     \vspace{-0.3cm}
 \end{figure}
  
\subsubsection{Sharing layers of teacher and student}
Inspired by the concept of layer sharing among student and teacher, as discussed in~\cite{knowledge}; in this work, we share the first $i$ layers of $M^{s}$ and $M^{te}$. In other words, first, $i$ layers of $M^s$ and $M^{te}$ are the same, as shown in Fig.~\ref{mgu1}. Layer sharing can be better visualized using Fig.~\ref{model}(b), where, $M^{s}$ is derived from $M^{te}$. The layer sharing improves the performance of the trained $M^s$ and provides less variation in its output predicted probabilities. The layer sharing also preserve resources during training of $M^s$. In this work, we use sensory data for DNN training; thus, the complexity of large-size DNN is low. This lower complexity helps in obtaining lightweight DNN with minimal compression of large-size DNN. Through experimental analysis, we obtained that even at $50\%$ layer sharing, the resource constraints of edge devices are satisfied. Thus, we are using $50\%$ common layers of $M^{s}$ and $M^{te}$.

\begin{figure}[h]
 \vspace{-0.3cm}
 \centering
 \includegraphics[scale=0.75]{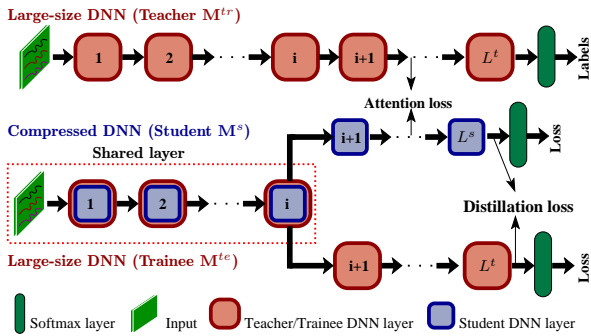}
 \caption{Training of $M^{te}$ and $M^{s}$ under guidance of $M^{tr}$.}
 \label{mgu1}
 \vspace{-0.7cm}
\end{figure}

\subsection{EarlyLight algorithm}
Algorithm~\ref{joint_algo} illustrates different steps involved in the EarlyLight approach to design and train lightweight DNN satisfying 
$\alpha$ and $\beta$ constraints of edge devices. The algorithm (Algorithm~\ref{joint_algo}) uses Procedure $1$ and Procedure $2$ to transform a large-size DNN into a lightweight DNN. It starts with the application of dropout on large-size DNN (Procedure $1$) followed by reducing resources of dropout DNN using Procedure $2$. These procedures are repeated until the constraints $\alpha$ and $\beta$ are not satisfied. The lightweight DNN obtained from Procedure $2$ is trained using knowledge distillation with early halting technique. The designed and trained DNN satisfy not only $\alpha$ and $\beta$ but also achieves adequate performance.

\begin{algorithm}[h]
\caption{\textbf{EarlyLight algorithm.}} 
\label{joint_algo}
\KwIn{Dataset ${D}$, teacher $M^{tr}$, trainee $M^{te}$, available space $\alpha$, MAP time $\beta$, halting epoch $h$, training epoch $E$\;}
\KwOut{Optimal lightweight model $M^{s}$\;}
\smallskip
Select a large-size DNN with $Q_a$ connections and $L$ layers\;
Identify $L{'}$ layers that are not to be shared\;
\smallskip
\nonl /* Variables initialization*/\\
\smallskip
$\delta_{L}\leftarrow 0$, $l \leftarrow L{'}/2$; /*$50\%$ of non-shared layers*/\\
Randomly assign value of $\lambda_1$, $\lambda_2$, and $\lambda_3$, such that $\lambda_1 +\lambda_2 + \lambda_3 =1$; set $\lambda_4\leftarrow1$ /*assumed in this work*/  \\
\smallskip
\SetKwRepeat{Do}{do}{while}
\Do{$(l \le  L{'})$}{
$l \leftarrow l+ \delta_{L}$\;
\smallskip
\nonl /*Obtaining dropout model from $M^{te}$ */\\
\smallskip
Call \textbf{Procedure 1}\;
\hspace{0.3cm}a.) Apply dropout on $l$ layers of $M^{te}$\; 
\hspace{0.3cm}b.) Obtain dropout model with $Q_b$ connections\;
\smallskip
Call \textbf{Procedure 2}\;
\hspace{0.3cm}a.) Solve optimization problem in Eq.~\ref{opti}\; 
\hspace{0.3cm}b.) Obtain compressed model $M^s$ from dropout model\;

\smallskip
\nonl /*Training of obtained model $M^s$*/\\
\smallskip
Obtain halting epoch $h$\;
\For{epoch $e\le E$}{
\If{$e\leq h$}{
Train $M^s$ using $M^{te}$ and $M^{tr}$\;
}

\Else{
\smallskip
\nonl /*Early halting technique*/\\
\smallskip
Train $M^s$ using $M^{tr}$\;
}
}
Solve optimization problem in Eq.~\ref{Main_optimization}\;
Obtain optimal value of $\lambda_1$, $\lambda_2$, and $\lambda_3$\; 
\smallskip
\nonl /*Append $\mathcal{L}_{comb}(\cdot)$ in list $\mathcal{P}[ \quad]$*/\\
\smallskip
$\mathcal{P}\leftarrow append(\mathcal{L}_{comb}(\cdot))$, preserve $M^{s}$  \;
$\delta_{L}\leftarrow L{'}/10$\;
}
\smallskip
$a\leftarrow \arg\min \{\mathcal{P}\}$\;
Obtain lightweight model $M^{s}$ for $\mathcal{L}_{comb}(\cdot)$ at $\mathcal{P}[a]$\;
$M^s$ is the appropriate student model for given edge device\;
\textbf{return} Optimal lightweight model $M^s$\; 
\end{algorithm}

\begin{figure*}[h]
\centering
\includegraphics[scale=0.95]{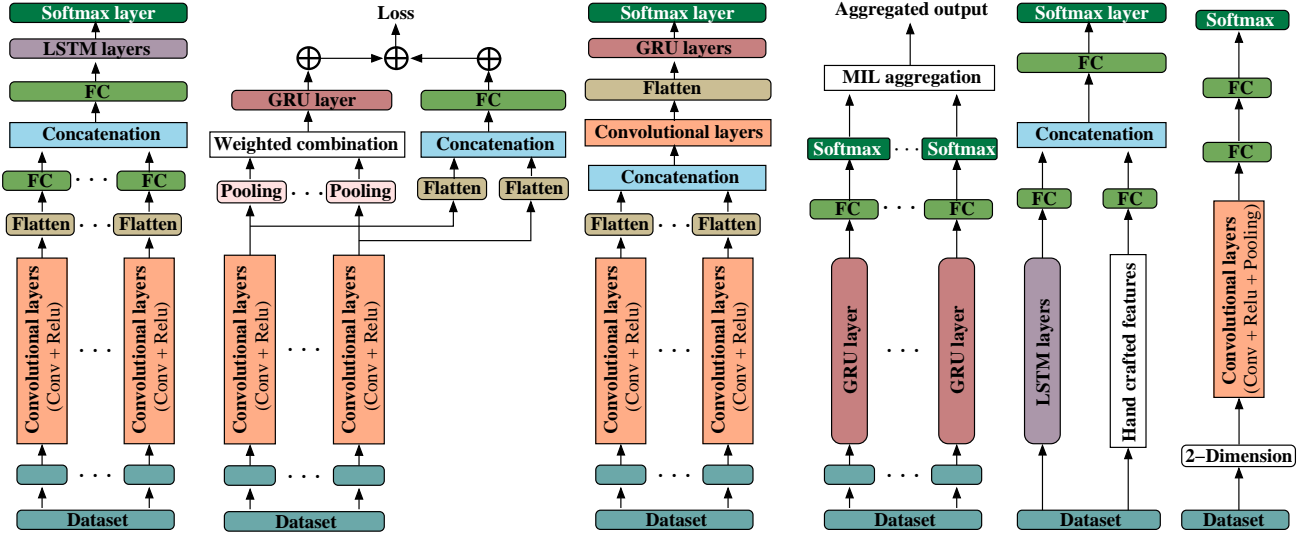}
\caption{Illustration of different DNN architectures incorporating sensory data for recognizing locomotion modes and human activities. (a) DeepZero~\cite{9164991}, (b) DeepFusion~\cite{xue2019deepfusion}, (c) DeepSense~\cite{yao2017deepsense}, (d) DT-MIL~\cite{janakiraman2018explaining}, (e) MFAD~\cite{chen2019smartphone}, and (f) HARM~\cite{noori2020human}.} 
\label{diff_models}
\vspace{-0.4cm}
\end{figure*}

\section{Empirical evaluation}
This section empirically evaluates the proposed work on publicly available datasets, existing large-size DNN, and edge devices. Our primary focus is to evaluate the effectiveness of the proposed work in transforming the large-size DNN to lightweight for edge devices.

\subsection{Evaluation setup}

\subsubsection{Large-size DNN $M^t$ architectures}\label{architecture}
We considered six existing DNN, including DeepZero~\cite{9164991}, DeepFusion~\cite{xue2019deepfusion}, DeepSense~\cite{yao2017deepsense}, DT-MIL~\cite{janakiraman2018explaining}, MFAP~\cite{chen2019smartphone}, and Human Activity Recognition using Multiple sensors fusion (HARM) [37], as shown in Fig.~\ref{diff_models}. These large-size DNN use sensory data for recognizing locomotion modes and human activities with high accuracy but require colossal parameters and FLOPs during their execution, as given in Table~\ref{result1}.

\begin{table}[h]
\caption{FLOPs and parameters required by large-size DNN, where $(A,B)$ is $A \times 10^B$.}
\resizebox{.49\textwidth}{!}{
\begin{tabular}{|c|p{0.6cm}|p{0.2cm}|p{0.6cm}|p{0.5cm}|c|c|}
\hline
\multirow{2}{*}{\textbf{DNN}} &\multicolumn{4}{l|}{\textbf{Number of DNN layers}}                            & \multirow{2}{*}{\textbf{\begin{tabular}[c]{@{}c@{}}FLOPs\end{tabular}}} & \multirow{2}{*}{\textbf{\begin{tabular}[c|]{@{}c@{}}Parameters\end{tabular}}} \\ \cline{2-5}
                                 & \textbf{Conv} & \textbf{FC} & \textbf{LSTM} & \textbf{GRU} &                                                                                          &                                                                                    \\ \hline
DeepZero~\cite{9164991}         &  $15$  &  $5$   &  $2$   &  \textemdash    & $(1.2,10)$  & $(3.8,7)$                                                                             \\ 
DeepFusion~\cite{xue2019deepfusion}& $18$   &  $3$   & \textemdash & $1$  & $(8.5,11)$                                                                                       & $(5.4,8)$                                                                                   \\ 
DeepSense~\cite{yao2017deepsense} & $12$   & \textemdash & \textemdash  &  $2$   & $(7.5,11)$                                                                                          & $(1.0,7)$                                                                                   \\ 
DT-MIL~\cite{janakiraman2018explaining} & \textemdash & $20$ & \textemdash  & $1$  & $(2.4,7)$                                                                                          & $(2.9,4)$                                                                                    \\ 
MFAP~\cite{chen2019smartphone} & \textemdash  & $2$  & $2$  & \textemdash & $(3.7, 7)$                                                                                          & $(1.2,7)$                                                                                   \\ 
HARM~\cite{noori2020human}   & $6$   & $2$  & \textemdash & \textemdash  & $(7.6,10)$                                                                                          & $(1.2,9)$                                                                                   \\ \hline
\end{tabular}
}
\label{result1}
\end{table}

\subsubsection{Datasets}
To evaluate the proposed work, we select four publicly available sensory datasets. These datasets are typically used in IoT applications, \textit{e.g.}, locomotion mode recognition (LMR)~\cite{shl2}, driving behaviour (DB)~\cite{vdb}, river pollution monitoring (RPM)~\cite{thoreau}. The specification of the datasets is mentioned in Table~\ref{datasets}. \#NC, \#SB, \#SR, \#TS, \#TR, and \#TE separately denote the number of classes, number of subjects, sampling rate (Hz), total samples, number of training samples, and number of testing samples, respectively.

\begin{table}[h]
\caption{Specifications of used datasets.}
\label{datasets}
 \resizebox{.485\textwidth}{!}{
\begin{tabular}{|@{}c|c|llllll@{}|}
\hline
\textbf{Dataset} & \multicolumn{1}{c|}{\textbf{Task}}&  \textbf{\#NC} & \textbf{\#SB}  & \textbf{\#SR}  & \textbf{\#TS} & \textbf{\#TR} & \textbf{\#TE}    \\ \hline
\textbf{SHL}~\cite{shl2} & LMR & $8$ & $4$ & $100$ & $22008$ & $16310$ & $5698$  \\ \hline
\textbf{VDB}~\cite{vdb} & DB & $5$  & $1$   & $10$  & $10000$  & $7000$  & $3000$  \\ \hline
\begin{tabular}[c]{@{}c@{}}\textbf{DBD}~\cite{dbd} \\ (upsampled)\end{tabular}& DB & $4$& $3$ & $2$& $11000$ & $7700$& $3300$  \\ \hline
\textbf{RWM}~\cite{thoreau} & RPM & $6$ & -    & $10$  & $100000$ & $70000$ & $30000$ \\ \hline
\end{tabular}
}
\end{table}

\subsubsection{Edge devices for running lightweight $M^s$} We consider five different edge devices for deploying the lightweight DNN, \textit{i.e.} trained student models ($M^s$), to verify the performance of the proposed approach. The devices include Intel edition kit ($\mathbf{d_1}$), Raspberry Pi $2$ ($\mathbf{d_2}$), Raspberry Pi $3$ ($\mathbf{d_3}$), Huwaie smartphone ($\mathbf{d_4}$), and Samsung smartphone ($\mathbf{d_5}$). The processing speed of the devices $\mathbf{d_1}$ to $\mathbf{d_{5}}$ are $11\times 10^8$, $3\times 10^9$, $5\times10^{10}$, $18\times 10^{10}$, and $29\times 10^{10}$ FLOPs/second, respectively.

\subsubsection{Baseline schemes for ablation studies}
Table~\ref{schemes1} summarizes the architecture of different lightweight $M^s$ and the training process for the ablation studies of the proposed work. Scheme $\mathbf{S}_5$ is the same as the proposed technique $\mathbf{S}_6$ with no early halting while training. 

\begin{table}[h]
\caption{Baseline schemes for ablation studies, where, $M^s$ (student) is lightweight DNN, and $M^{tr}$ (teacher) and $M^{te}$ (trainee) are large-size DNN.}
\label{schemes1}
\small
\begin{tabular}{|c|l|}
\hline
\textbf{Scheme} &\multicolumn{1}{c|}{\textbf{Description}}\\ \hline
\multirow{2}{*}{$\mathbf{S}_1$~\cite{hinton2015distilling}} & $M^s$ and $M^{tr}$ are independent\\ \cline{2-2}
                  & Training of $M^s$ guided by pre-trained $M^{tr}$
\\ \hline
\multirow{2}{*}{$\mathbf{S}_2$~\cite{mishra2017apprentice}} &  $M^s$ and $M^{te}$ are independent\\ \cline{2-2}
                   & $M^s$ and $M^{te}$ are trained simultaneously  
 \\ \hline
\multirow{2}{*}{$\mathbf{S}_3$~\cite{knowledge}}  &  $M^s$ is sub-model derive from $M^{te}$\\ \cline{2-2}
                   & $M^s$ and $M^{te}$ are trained simultaneously   
 \\ \hline
\multirow{2}{*}{$\mathbf{S}_4$~\cite{9151346}} &  $M^s$ is independent of $M^{tr}$ and $M^{te}$\\ \cline{2-2}
                   & $M^s$ and $M^{te}$ are trained under guidance of $M^{tr}$  
 \\ \hline
\multirow{2}{*}{$\mathbf{S}_5$} &  $M^s$ is sub-model derive from $M^{tr}$ or $M^{te}$\\ \cline{2-2}
                   & $M^s$ and $M^{te}$ are trained under guidance of $M^{tr}$  
 \\ \hline
\multirow{3}{*}{\begin{tabular}[c]{@{}c@{}}$\mathbf{S}_6$\\ (\textbf{proposed})\end{tabular}} &  $M^s$ is sub-model derive from $M^{tr}$ or $M^{te}$\\ \cline{2-2}
                    & \begin{tabular}[c]{@{}l@{}} $M^s$ and $M^{te}$ are trained under guidance by $M^{tr}$\\ up to some epochs then $M^{te}$ \textbf{training is halted} \end{tabular} 
\\ \hline
\end{tabular}
\end{table}

\subsubsection{Implementation details}
For implementing the lightweight DNN transformed from large-size DNN, as illustrated in Fig.~\ref{diff_models}, we incorporated the sequential model and functional API of deep learning library Keras in Python language. Next, Algorithm~$1$ and all the procedures are implemented in Python. We adopt the differential evolution technique for estimating the fractional contributions of the different loss functions, \textit{i.e.,} $\lambda_1$, $\lambda_2$, and $\lambda_3$. In the experimental analysis, we randomly divide the datasets into two sub-datasets, \textit{i.e.,} training and testing with $70\%$ and $30\%$ data instances, respectively, using the function $sklearn.model\_selection.train\_test\_split()$ in Sklearn model selection. We repeat each experiment $100$ times and calculate the average value. Further, $\alpha$ (memory size) is the maximum available memory on the edge while requesting the appropriate compressed DNN. $\beta$ (maximum allowable processing time) is estimated by analyzing the data processing history of the devices. In other words, $\beta$ is determined in accordance with the processing speed (FLOPs/seconds) achieved by the device in the past event of data processing. $\beta \ge$ prior device processing speed $\times$ current FLOPs. 
The lightweight DNN can be trained on a considered device or server. The training time substantially reduces in both cases while adopting the early halting scheme. This type of training reduces the training time and resources on devices. As resources are limited; thus, fast training could be beneficial. 

\vspace{-0.45cm}

\subsection{Validation metrics}\label{validation}
This work used the following standard classification metrics to evaluate and compare the performance of the FFL technique: F$_1$ score and accuracy. Let a given dataset consists of a set of $\mathcal{A}$ classes, and $\left | \mathcal{A} \right |$ represents the number of classes. Let $TP_i$, $TN_i$ $FP_i$, and $FN_i$ are the true positive, true negative, false positive, and false negative counts of a class  $i \in \mathcal{A}$, respectively. 
The \textit{accuracy} metric is computed as: 
\begin{equation}\small
\label{acc_m}
\frac{1}{\left | \mathcal{A} \right |}\sum_{i=1}^{\left | \mathcal{A} \right |}\frac{TP_i + TN_i}{TP_i+TN_i+FP_i+FN_i}.
\end{equation}
Next, the \textit{F$_1$ score} is computed as: 
\begin{equation}\small
\label{f1_m}
\frac{1}{\left | \mathcal{A} \right |}\sum_{i=1}^{\left | \mathcal{A} \right |} \frac{2 \times TP_i}{2\times TP_i+FP_i+FN_i}.
\end{equation}

\begin{table*}[h]
\caption{Illustration of accuracy ($\%$) achieved by schemes ($\mathbf{S}_1$-$\mathbf{S}_6$) on device specific student from teacher (DeepZero~\cite{9164991}, DeepFusion~\cite{xue2019deepfusion}, DeepSense~\cite{yao2017deepsense}, DT-MIL~\cite{janakiraman2018explaining}, MFAD~\cite{chen2019smartphone}, and HARM~\cite{noori2020human}). fConv and Conv are factorized and unfactorized Convolutional layers, fFC = factorized FC layer, cLSTM = coupled LSTM, MGU= Minimal Gated Unit.}
\centering
\label{result3}
\begin{tabular}{|p{0.5cm}|c|l|c|c|cccccc|}
\cline{2-11}
 \multicolumn{1}{c|}{} & \multirow{2}{*}{\textbf{\rotatebox{00}{Device}}} & \multicolumn{3}{c|}{\textbf{Student model specification}}                                 & \multicolumn{6}{c|}{\textbf{Accuracy (\%) in scheme}} \\ \cline{3-11} 
 \multicolumn{1}{c|}{}  &                                  & \multicolumn{1}{c|}{\textbf{Number of DNN layers}} & \textbf{FLOPs} & \textbf{Parameters} & $\mathbf{S}_1$~\cite{hinton2015distilling}     & $\mathbf{S}_2$~\cite{mishra2017apprentice}     & $\mathbf{S}_3$~\cite{knowledge}     & $\mathbf{S}_4$~\cite{9151346}    & $\mathbf{S}_5$    & \begin{tabular}[c]{@{}c@{}}$\mathbf{S}_6$\\ (\textbf{Proposed})\end{tabular} \\  \cline{1-11} 
\multicolumn{1}{|c|}{\multirow{8}{*}{\rotatebox{90}{\textbf{DeepZero~\cite{9164991}}}}} & $\mathbf{d_1}$  & fConv $=5$, fFC $=3$, cLSTM $=2$  & $1.0 \times 10^8$  & $1.1\times 10^6$ & $75.43$ & $79.27$ & $82.23$ & $87.27$ & $89.41$ &  $90.53$     \\ \cline{2-11} 
& $\mathbf{d_2}$  & fConv $=7$, fFC $=5$, cLSTM $=2$  & $2.1 \times 10^8$  & $1.6\times10^6$ 
& $79.42$ & $83.19$ & $85.19$ & $89.13$ & $91.21$ &  $91.08$     \\ \cline{2-11} 
& $\mathbf{d_3}$  & \begin{tabular}[l]{@{}l@{}}fConv $=12$, Conv $=3$, fFC $=5$,\\ LSTM $=1$, cLSTM$=1$ \end{tabular} & $2.2 \times 10^9$  & $1.9 \times 10^7$ 
& $82.31$ & $85.11$ & $87.93$ & $90.97$ & $92.54$  & $92.32$       \\ \cline{2-11} 
& $\mathbf{d_4}$  & \begin{tabular}[l]{@{}l@{}}fConv $=1$, Conv $=14$, fFC $=3$, \\FC $=2$, LSTM $=1$, cLSTM $=1$ \end{tabular} & $4.5 \times 10^9$  & $3.1 \times 10^7$                  & $85.41$ & $87.29$ & $88.71$ & $92.23$ & $93.51$  &  $93.33$      \\ \cline{2-11} 
& $\mathbf{d_5}$  & \begin{tabular}[l]{@{}l@{}}fConv $=1$, Conv $=14$, fFC $=1$,\\ FC $=4$, LSTM $=1$, cLSTM $=1$ \end{tabular} & $4.8\times 10^9$  & $3.2\times 10^7$  & $86.21$ & $87.45$ & $88.92$ & $92.71$ & $93.44$ &  $93.57$ \\ \hline \hline

\multicolumn{1}{|c|}{\multirow{7}{*}{\rotatebox{90}{\textbf{DeepFusion~\cite{xue2019deepfusion}}}}}
& $\mathbf{d_1}$  & fConv $=6$, fFC $=2$, MGU $=1$  & $1.3 \times 10^{10}$  & $1.0\times10^8$  & $85.93$ & $88.81$ & $89.78$ & $90.07$  & $91.13$ & $91.19$       \\ \cline{2-11} 
& $\mathbf{d_2}$  & fConv $=6$, fFC $=3$, GRU $=1$  & $1.6 \times 10^{10}$  & $1.1\times10^8$  & $86.17$ & $89.43$ & $90.03$ & $90.56$  & $92.83$ & $92.16$       \\ \cline{2-11} 
& $\mathbf{d_3}$  & fConv $=7$, fFC $=3$, GRU $=1$ & $1.4 \times 10^{11}$  & $2.0 \times 10^8$  & $88.29$ & $90.97$ & $91.07$ & $92.43$  & $94.07$ & $93.23$      \\ \cline{2-11} 
& $\mathbf{d_4}$  & \begin{tabular}[l]{@{}l@{}} fConv $=12$, Conv $=3$, fFC $=3$,\\  MGU $=1$ \end{tabular} & $3.0 \times 10^{11}$  & $2.8 \times 10^8$ & $89.31$ & $91.34$ & $92.23$ & $92.71$ & $94.93$   & $94.57$       \\ \cline{2-11} 
& $\mathbf{d_5}$  &  \begin{tabular}[l]{@{}l@{}}  fConv $=10$, Conv $=5$, fFC $=3$,\\ MGU $=1$ \end{tabular} & $3.4 \times 10^{11}$  & $3.1 \times 10^8$  & $89.73$ & $91.47$ & $92.31$ & $92.97$ & $95.03$   & $94.63$  \\ \hline \hline

\multicolumn{1}{|c|}{\multirow{7}{*}{\rotatebox{90}{\textbf{DeepSense~\cite{yao2017deepsense}}}}}
& $\mathbf{d_1}$  & fConv $=8$, GRU $=1$  & $6.1 \times 10^{10}$  & $1.0\times10^6$ & $80.22$ & $80.83$ & $84.21$ & $84.89$  & $89.12$ & $89.47$       \\ \cline{2-11} 
& $\mathbf{d_2}$  &  \begin{tabular}[l]{@{}l@{}} fConv $=6$, Conv $=4$, GRU $=1$,\\ MGU $=1$ \end{tabular} & $9 \times 10^{10}$  & $1.1\times10^6$ & $81.29$ & $81.92$ & $87.20$ & $87.62$  & $92.53$ & $92.21$ \\ \cline{2-11} 
& $\mathbf{d_3}$  & fConv $=6$, Conv $=4$,GRU $=2$ & $1.1 \times 10^{11}$  & $1.7 \times 10^6$ & $82.06$ & $82.81$ & $88.17$ & $88.97$  & $92.91$ & $93.03$      \\ \cline{2-11} 
& $\mathbf{d_4}$  & \begin{tabular}[l]{@{}l@{}}fConv $=4$, Conv $=8$, GRU $=1$,\\ MGU $=1$ \end{tabular} & $2.1 \times 10^{11}$  & $2.3 \times 10^6$ & $82.72$ & $83.09$ & $88.51$ & $89.23$ & $93.09$   & $93.20$       \\ \cline{2-11} 
& $\mathbf{d_5}$  & fConv $=4$, Conv $=8$,GRU $=2$ & $4.3 \times 10^{11}$  & $5.2 \times 10^6$  & $82.91$ & $83.93$ & $89.02$ & $89.91$ & $93.37$   & $93.44$       \\  \hline \hline

\multicolumn{1}{|c|}{\multirow{5}{*}{\rotatebox{90}{\textbf{DT-MIL~\cite{janakiraman2018explaining}}}}}
& $\mathbf{d_1}$  & fFC $=12$, MGU $=1$  & $3.2 \times 10^{6}$  & $8.1\times10^3$  & $73.93$ & $78.61$ & $79.53$ & $84.21$  & $87.23$ & $87.07$       \\ \cline{2-11} 
& $\mathbf{d_2}$  & fFC $=12$, FC $=2$, MGU $=1$  & $4 \times 10^{6}$  & $9\times10^3$  & $74.27$ & $78.83$ & $80.27$ & $84.61$  & $88.73$ & $88.26$       \\ \cline{2-11} 
& $\mathbf{d_3}$  & fFC $=12$, FC $=6$, MGU $=1$ & $6 \times 10^{6}$  & $1.1 \times 10^4$ & $75.29$ & $81.44$ & $81.93$ & $85.07$  & $89.45$ & $89.03$      \\ \cline{2-11} 
& $\mathbf{d_4}$  & fFC $=12$, FC $=8$, MGU $=1$ & $9 \times 10^{6}$  & $1.2 \times 10^4$  & $75.89$ & $81.87$ & $82.29$ & $85.27$ & $90.83$   & $90.85$       \\ \cline{2-11} 
& $\mathbf{d_5}$  & fFC $=8$, FC $=12$, GRU $=1$ & $1.1 \times 10^{7}$  & $1.6 \times 10^4$  & $76.39$ & $82.91$ & $83.17$ & $85.59$ & $91.21$   & $91.07$  \\  \hline \hline

\multicolumn{1}{|c|}{\multirow{7}{*}{\rotatebox{90}{\textbf{MFAD~\cite{chen2019smartphone}}}}}
& $\mathbf{d_1}$ & fFC $=1$, FC $=1$, LSTM $=1$  & $2.1\times 10^{7}$ & $7.6 \times 10^6$ & $79.17$ & $82.08$ & $83.07$ & $84.21$ & $85.37$   & $85.16$       \\ \cline{2-11} 
& $\mathbf{d_2}$  & fFC $=1$, FC $=1$, LSTM $=1$  & $2.1\times 10^{7}$ & $7.6 \times 10^6$ & $79.29$ & $82.31$ & $83.26$ & $84.53$ & $85.81$   & $85.23$       \\ \cline{2-11} 
& $\mathbf{d_3}$  & \begin{tabular}[l]{@{}l@{}} fFC $=1$, FC $=1$, LSTM $=1$,\\ cLSTM $=1$ \end{tabular}  & $2.4\times 10^{7}$ & $9.1 \times 10^6$ & $80.17$ & $83.23$ & $84.11$ & $85.38$ & $87.47$   & $87.13$       \\ \cline{2-11} 
& $\mathbf{d_4}$  & \begin{tabular}[l]{@{}l@{}} fFC $=1$, FC $=1$, LSTM $=1$,\\ cLSTM $=1$ \end{tabular} & $2.4\times 10^{7}$ & $9.1 \times 10^6$ 
& $80.41$ & $83.63$ & $84.61$ & $85.83$ & $87.65$   & $87.29$       \\ \cline{2-11} 
& $\mathbf{d_5}$  & fFC $=1$, FC $=1$, LSTM $=2$  & $3.2\times 10^{7}$ & $1.0 \times 10^7$ 
& $81.01$ & $84.33$ & $85.07$ & $86.51$ & $88.27$   & $88.02$     \\  \hline \hline

\multicolumn{1}{|c|}{\multirow{5}{*}{\rotatebox{90}{\textbf{HARM~\cite{noori2020human}}}}}
& $\mathbf{d_1}$  & fConv $=1$, fFC $=2$ & $4.5 \times 10^{9}$  & $6.1\times10^7$  & $76.71$  & $80.49$  & $82.96$ & $87.81$ & $90.09$ & $90.13$ \\ \cline{2-11}
& $\mathbf{d_2}$  & fConv $=1$, fFC $=2$ & $4.5 \times 10^{9}$  & $6.1\times10^7$  & $77.39$  & $81.37$  & $84.76$ & $89.03$ & $90.39$ & $90.23$ \\ \cline{2-11} 
& $\mathbf{d_3}$  & fConv $=2$, fFC $=2$ & $1.9 \times 10^{10}$  & $3.1 \times 10^8$ & $80.01$  & $83.23$ & $85.59$ & $89.47$  & $91.92$  & $90.93$      \\ \cline{2-11} 
& $\mathbf{d_4}$  & fConv $=4$ fFC $=2$ & $2.9 \times 10^{10}$  & $4.4 \times 10^8$  & $80.22$ & $83.63$ & $85.81$ & $89.93$ & $92.17$ & $91.71$       \\ \cline{2-11} 
& $\mathbf{d_5}$  & fConv $=4$, Conv $=1$ fFC $=2$ & $3.4 \times 10^{10}$  & $5.1 \times 10^8$ & $80.71$ & $83.61$ & $86.21$ & $90.21$ & $92.19$ & $92.34$ \\ \hline
\end{tabular}
\end{table*}

\subsection{Experimental results}
This section carries out the experimental evaluations to illustrate the impact of the following on performance of proposed approach: schemes $\mathbf{S}_1$-$\mathbf{S}_6$, loss functions, duration of training, size of lightweight  $M^s$, early halting of training of $M^{te}$, and datasets  in Section~\ref{resulte11}, Section~\ref{resulte21}, Section~\ref{resulte31}, Section~\ref{resulte41}, Section~\ref{resulte51}, and Section~\ref{resulte61}, respectively.

\subsubsection{Impact of different schemes on accuracy of DNN} \label{resulte11}
First, we performed the experiment to estimate the performance of lightweight DNN train using different schemes $\mathbf{S}_1$-$\mathbf{S}_6$. We considered different large-size DNN and $\mathbf{d_1}$ to $\mathbf{d_{5}}$ edge devices. The value of constraints $\alpha=0.65$ and $\beta=180$ ms. Table~\ref{result3} illustrates the configuration of lightweight DNN based on the available resources on edge. The result depict that the accuracy of the lightweight DNN transform using scheme $\mathbf{S}_6$ is almost equal to the scheme $\mathbf{S}_5$. However, $\mathbf{S}_5$ requires a large number of FLOPs and parameters during the training of lightweight DNN. Due to the early halting of the trainee model training, the scheme $\mathbf{S}_6$ achieves a significant reduction in training time of lightweight DNN $M^s$. It is interesting to observe that the transformed lightweight DNN from large-size DNN using scheme $\mathbf{S}_6$ achieves high accuracy within the constraints of edge device, take less time during training of lightweight model and therefore saves the energy and resources of training machine. 

\subsubsection{Impact of the loss functions on accuracy of DNN} \label{resulte21}
Table~\ref{lambda} illustrates the fractional contribution of different loss functions on the performance of the lightweight DNN $M^s$ using different edge devices. We used the devices and available resources as shown in previous results. We considered DeepZero as a large-size DNN (teacher model). We can observe from the result that with the increase in the device's resources, the accuracy and F$_1$ score of the lightweight DNN $M^s$ improved, and the contribution of distillation loss  ($\lambda_3$) increases. It is because when the difference between $M^s$ and $M^{te}$ is significant then simultaneous training deviates the $M^s$ from achieving optimal convergence point due to random initialization of $M^{te}$.

\begin{table}[h]
\centering
\caption{Fractional contributions ($\lambda_1$, $\lambda_2$, and $\lambda_3$) of different loss functions on the performance of $M^{s}$ with 
$\mathbf{d_1}$ to $\mathbf{d_5}$ devices. }
\resizebox{.49\textwidth}{!}{
\begin{tabular}{cccccc}
\hline
\multirow{2}{*}{\textbf{Device}} & \multicolumn{3}{c}{\textbf{Fractional weights}} & \multirow{2}{*}{\textbf{\begin{tabular}[c]{@{}c@{}}Accuracy\end{tabular}}} & \multirow{2}{*}{\textbf{\begin{tabular}[c]{@{}c@{}}F$_1$ score\end{tabular}}} \\ \cline{2-4}
                & $\lambda_1$ &  $\lambda_2$ & $\lambda_3$  &   &  \\ \hline
$\mathbf{d_1}$  & $0.5117$ & $0.3972$ & $0.0911$  &  $90.53\%$  & $91.21\%$ \\ \hline
$\mathbf{d_2}$  & $0.4919$ & $0.4523$ & $0.0563$  &  $91.08\%$  & $92.74\%$ \\ \hline
$\mathbf{d_3}$  & $0.3208$ & $0.3563$ & $0.3227$  &  $92.32\%$  & $93.98\%$ \\ \hline
$\mathbf{d_4}$  & $0.3700$ & $0.2965$ & $0.3334$  &  $93.33\%$  & $94.90\%$ \\ \hline
$\mathbf{d_5}$  & $0.3922$ & $0.2717$ & $0.3361$  &  $93.57\%$  & $95.13\%$ \\ \hline
\end{tabular}
}
\label{lambda}
\end{table}

\subsubsection{Impact of the training time on accuracy}\label{resulte31}
In this experiment, we determine the training time and accuracy achieved under different schemes for $M^{s}$. We consider DeepZero as large-size DNN  $M^{tr}$, whose lightweight variant is deployed on device $\mathbf{d_3}$. Table~\ref{time} illustrates the training time and accuracy of different schemes for device $\mathbf{d_3}$ excluding the training time of the pre-trained teacher model. As shown in the previous result, $\mathbf{S}_5$ and $\mathbf{S}_6$ give the high accuracy as compared with others. $\mathbf{S}_5$ trains both $M^s$ and $M^{te}$ simultaneously and therefore needs more FLOPs. The proposed $\mathbf{S}_6$ early halts the training of $M^{te}$ and needs fewer resources. Therefore, an edge device with limited resources takes more time to train $\mathbf{S}_5$ as compared to the proposed $\mathbf{S}_6$. Table~\ref{time} illustrates $\mathbf{S}_6$ has the best accuracy within the given training time. This is because other schemes either lack sufficient training or design lightweight DNN randomly, resulting in lower accuracy.  

\begin{table}[h]
\caption{Training time and accuracy on different schemes for device $\mathbf{d_3}$ on large-size DNN (DeepZero).}
 \resizebox{.49\textwidth}{!}{
\begin{tabular}{ccccccc}
\hline
\textbf{Schemes}       & $\mathbf{S}_1$ & $\mathbf{S}_2$ & $\mathbf{S}_3$ & $\mathbf{S}_4$ & $\mathbf{S}_5$ & $\mathbf{S}_6$\\ \hline
\multicolumn{7}{c} {\textbf{Part (a): Accuracy v/s required training time (in minutes)}} \\
\textbf{Accuracy} (in $\%$) & $82.31$ & $85.11$ & $87.93$ & $90.97$ & $92.54$  & $92.32$                                                                  \\ 
\begin{tabular}[c]{@{}c@{}}\textbf{Training time} $\pm 3$ \end{tabular} & $83$ & $219$ & $193$ & $231$ & $207$ & $163$                                                                 \\ \hline
\multicolumn{7}{c} {\textbf{Part (b): Accuracy on a given training time $\mathbf{=180}$ minutes}} \\
\begin{tabular}[c]{@{}c@{}}\textbf{FLOPs ($\times 10^{13}$)}\end{tabular} & $1.09$ & $2.37$ & $2.37$ & $2.37$ & $2.37$ & $2.15$                                                               \\ 
\textbf{Accuracy} (in $\%$)  & $82.03$ & $69.95$ & $82.03$ & $70.88$ & $80.47$  & $92.32$                                                                  \\ \hline
\end{tabular}
}
\label{time}
\end{table}

\subsubsection{Compression ratio of large-size DNN}\label{resulte41}
In this section, we illustrate the impact of the compression ratio (size of $M^s$/size of $M^t$) on the accuracy of the lightweight DNN. The compression ratio depends on the available resources of the edge device. An edge device requires a high compression ratio with low processing speed (FLOPs) on a fixed MAP time $\beta$. Table~\ref{compression} illustrates the various compression ratios of DeepZero. As expected, the high compression ratio gives low accuracy and F$_1$ score. An interesting observation from this result is that the accuracy and F$_1$ score go down sharply after a fixed compression ratio. The lightweight DNN for the given compression has very few layers and gated units. Thus, it shows incompetence in successfully classifying the given classes. Additionally, F$_1$ score is higher than the achieved accuracy due to the uneven distribution of class labels. 

\begin{table}[h]
\caption{Different compression ratio of large-size DNN (DeepZero).}
\resizebox{.49\textwidth}{!}{
\begin{tabular}{c|cccccc}
\hline
\textbf{\begin{tabular}[c]{@{}c@{}}Compression\\ ratio\end{tabular}}  & $\times60$ & $\times50$ & $\times43$  & $\times18$& $\times13$ & $\times4.8$  \\ \hline 
\textbf{\begin{tabular}[c]{@{}c@{}}Accuracy (in \%)\end{tabular}} & $67.43$ & $70.17$ & $73.22$ & $87.21$ & $90.23$ & $92.16$ \\ \hline
\textbf{\begin{tabular}[c]{@{}c@{}}F$_1$ score (in \%)\end{tabular}} & $69.09$ & $71.83$ & $74.88$ & $88.87$ & $91.89$ & $93.82$ \\ \hline
\textbf{\begin{tabular}[c]{@{}c@{}}FLOPs ($\times 10^{9}$)\end{tabular}} & $2.03$  & $2.40$  & $2.79$ & $6.67$ & $9.23$ & $25.21$ \\ \hline
\end{tabular}
}
\label{compression}
\end{table}

\subsubsection{Impact of halting of $M^{te}$ on performance of $M^s$}\label{resulte51}
Further, we depict the impact of the halting time of $M^{te}$ on the performance of lightweight $M^s$. We used the SHL dataset and the DeepZero model. Table~\ref{halting} illustrates that after a fixed duration of training of $M^{te}$, the progress in the accuracy and F$_1$ score of $M^s$ is almost constant. However, the required resources for continuous training of $M^{te}$ is increased with time. We, therefore, conclude that training of $M^{te}$ and $M^s$ till the end of processing does not provide high accuracy. They only consume more resources. Fig.~\ref{halting} illustrates the saturation in accuracy and F$_1$ score after a certain epochs ($60$ for DeepZero). After this saturation point, we can quickly halt the training of $M^{te}$ without compromising accuracy of $M^s$.   

\begin{table}[h]
\caption{Impact of halting training of trainee model on accuracy and F$_1$ score achieved by $M^s$ of DeepZero~\cite{9164991} for device $\mathbf{d_2}$.}
\centering
 \resizebox{.49\textwidth}{!}{
\begin{tabular}{l@{}ccccccc}
\hline
\textbf{\begin{tabular}[c]{@{}c@{}}\textbf{Epochs}\end{tabular}}  & $40$   & $50$    & $\mathbf{60}$ & $70$ & $80$ & $90$ & $100$   \\ \hline
\begin{tabular}[c]{@{}c@{}}\textbf{Training time}\\ $M^{te}$ (in min.)\end{tabular} & $141$ & $154$ & $\mathbf{163}$ & $172$ & $183$ & $198$   & $207$   \\ \hline
\begin{tabular}[c]{@{}c@{}}Accuracy \\ $M^s$ (in $\%$)\end{tabular}    & $78.51$ & $83.02$ & $\mathbf{91.08}$ & $91.37$ & $91.53$ & $91.97$ & $92.54$ \\ \hline
\begin{tabular}[c]{@{}c@{}}F$_1$ score \\ $M^s$ (in $\%$)\end{tabular}    & $80.17$ & $84.68$ & $\mathbf{92.74}$ & $93.03$ & $93.19$ & $93.63$ & $94.20$ \\ \hline
\begin{tabular}[c]{@{}c@{}}FLOPs\\ ($\times 10^{13}$) \end{tabular}  & $2.53$ & $2.77$ & $\mathbf{2.93}$ & $3.09$ & $3.29$ & $3.56$ & $3.73$ \\ \hline
\end{tabular}
}
\label{halting}
\end{table}

\subsubsection{Impact of datasets on the accuracy of DNN}\label{resulte61}
Finally, we study the performance achieved by different schemes ($\mathbf{S}_1$-$\mathbf{S}_6$) on selected datasets (SHL, VDB, DBD, and RWM). Fig.~\ref{dataset}(a) illustrates the average accuracy (in \%) achieved on schemes $\mathbf{S}_1$-$\mathbf{S}_6$. The result illustrates that for the DBD dataset, the accuracy under each scheme is highest. It is due to the least number of classes in the DBD dataset. Similarly, SHL achieves the lowest accuracy due to the presence of a maximum $8$ classes. Next, $\mathbf{S}_5$ and $\mathbf{S}_6$ achieve the highest accuracy on all the datasets (\textit{i.e.,} SHL, VDB, DBD, and RWM). Scheme $\mathbf{S}_5$ slightly supersedes the accuracy of scheme $\mathbf{S}_6$, as it incorporates training of trainee for all epochs. However, $\mathbf{S}_5$ consumes higher resources than $\mathbf{S}_6$. Further, a similar variation in F$_1$ score is observed under schemes $\mathbf{S}_1$-$\mathbf{S}_6$ on different datasets, as illustrated in Fig.~\ref{dataset}(b). F$_1$ score is higher than the accuracy for all datasets. The class labels are not uniformly distributed among all class labels in the considered datasets. An interesting observation from the result is that if the number of class labels in the dataset is small then the achieved accuracy will be higher. Additionally, if the distribution of class labels is non-uniform then F$_1$ score will be higher in contrast with accuracy. 

\begin{figure}[h]
        \centering
\begin{tabular}{cc}
\hspace{-0.2cm}\includegraphics[scale=0.55]{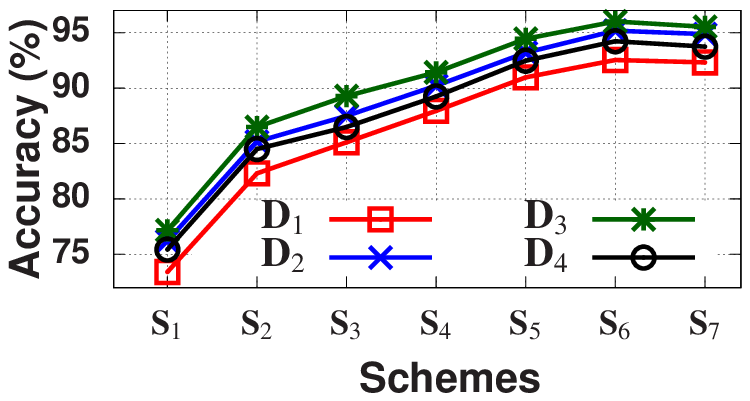}  & \hspace{-0.3cm} \includegraphics[scale=0.55]{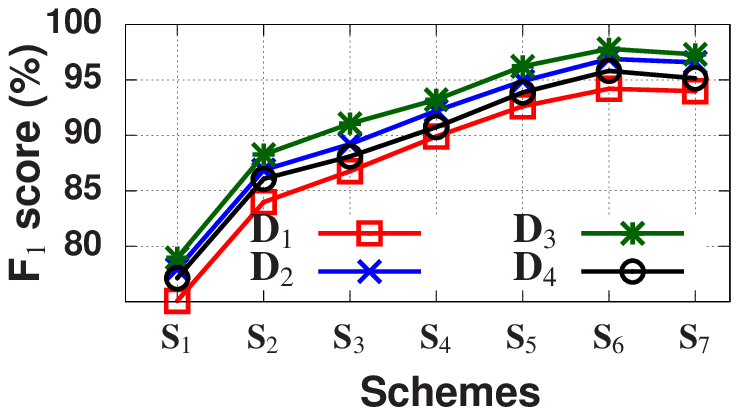}\vspace{-.1cm} \vspace{-.1cm}\\
\hspace{0.3cm}\scriptsize{(a) Accuracy.} & \hspace{0.3cm} \scriptsize{(b) F$_1$ score.}
\end{tabular}
        \caption{Impact of datasets (SHL, VDB, DBD, and RWM) on accuracy and F$_1$ score of different schemes.} 
        \label{dataset}
        \vspace{-0.6cm}
\end{figure}

\section{Real-world evaluation}\label{evaluate_model}
This section presents a real-world application of the proposed lightweight DNN to recognize unseen locomotion modes on edge devices. 
To verify the effectiveness of the proposed approach, we collected locomotion mode recognition dataset for evaluation. 
\vspace{-0.5cm}
\subsection{Hardware and software}
The prototype hardware is based on the NodeMCU ESP32 as data collection and processing unit. It is powered by ESP8266 module that can wirelessly transmit data using WiFi. Next, we attach the inertial sensors to measure angular rate, force and magnetic field and transfer to the NodeMCU. Further, the data is processed on NodeMCU using deployed lightweight DNN to predict class labels (locomotion modes). These labels are transferred to the server using the GSM module. NodeMCU has $512$ KB SRAM (with $4$ MB flash storage); therefore a high order DNN compressed is needed. We considered two scenarios of locomotion mode recognition, \textit{i.e.,} identifying locomotion modes using sensors deployed in the shoes of the kids and the wrist band of a person. 
 
We used DeepZero~\cite{9164991} as the large-size DNN upon which transform is performed. Fig.~\ref{diff_models}(a) illustrates the architecture of the DeepZero. The lightweight DNN of DeepZero is exported and loaded into the flash memory of the NodeMCU. The large-size DeepZero is trained on the Dell PC with $32$ GB RAM with a clock speed of $2.4$ GHz. The pre-trained model is further transformed using the proposed scheme and deployed on NodeMCU. Besides, the compressed DNN (from DeepZero) is trained on the server, as the available memory on NodeMCU is limited to store training data on its primary storage. 

\vspace{-0.5cm}
\subsection{Data collection} 
We collected the sensory data of different locomotion modes including bicycle ($\mathbf{a_1}$), bike ($\mathbf{a_2}$), car ($\mathbf{a_3}$), auto rickshaw ($\mathbf{a_4}$), bus ($\mathbf{a_5}$), and train ($\mathbf{a_6}$). To facilitate the data collection, we developed an android application that uses the Inertial Measurement Unit (IMU) sensor of the smartphone. The sampling rate of IMU is set to $100$ Hz to record $6000$ data points per minute. We used an android smartphone, Samsung Galaxy Alpha for collecting data against each locomotion modes. The data was collected by the $10$ volunteers ($5$ males and $5$ females). The android application consists of a menu through which volunteers can select a locomotion mode. The measurements of the IMU sensor is recorded for $60$ seconds. 

\subsection{Evaluation methods}
We considered the following four evaluation methods for verifying the effectiveness of the proposed approach in the real world scenario. First, we used \textbf{Baseline} method that is a lightweight version of DeepZero~\cite{9164991} and NodeMCU ESP32 as edge device. Next, we used \textbf{KD$_1$} an extension of the baseline method, where the lightweight DNN is trained under the guidance of the pre-trained DeepZero model using the knowledge distillation technique discussed in~\cite{hinton2015distilling}. Next, \textbf{KD$_2$} method where some initial layers of lightweight DNN and standard DeepZero are shared to improve the performance of lightweight DNN. The lightweight and standard models are trained simultaneously using the knowledge distillation technique discussed in~\cite{knowledge}. Finally, we used the proposed approach, named as \textbf{Proposed} in the results.  Apart from the existing methods, we adopt an early halting technique for training the lightweight DNN under the guidance of pre-trained and untrained DeepZero. The initial layers of the compressed DNN are shared with the large-size DeepZero. 

\subsection{Validation metrics} 
\noindent $\bullet$ \textit{F$_1$ score and accuracy:} The description of the validation metrics F$_1$ score and accuracy are discussed in Section~\ref{validation}. The equations of F$_1$ score and accuracy are given in Eq.~\ref{acc_m} and Eq.~\ref{f1_m}, respectively.\\
 \noindent $\bullet$ \textit{Precision:} Precision of a DNN is defined as the ratio of correct positive observation to the total correctly predicted observation, \textit{i.e.}, $\mathbf{P}_3=\frac{1}{\left | \mathcal{A} \right |}\sum_{i=1}^{\left | \mathcal{A} \right |}\frac{TP_i}{TP_i+FP_i}$.\\
\noindent $\bullet$ \textit{Leave-one-out test:} This validation metric trains the DNN for all class labels except for one randomly chosen class label. However, during testing, the unseen class label is also supplied for predicting the output. Thus, it evaluates the performance of the classifier for unseen class labels.\\
\vspace{-0.5cm}
\subsection{Result 1: Impact of memory and execution time}\label{rw1}
We first study the impact of memory and execution time on the validation metrics. Fig.~\ref{memoryrw}(a1) illustrates the accuracy achieved by the different methods with the change in memory ratio. The memory ratio is the ratio of required memory to the available memory of edge device. We can observe from the results that the proposed work outperforms the existing methods and achieves significantly higher accuracy with minimal energy consumption. The proposed work achieves accuracy around $94\%$ when memory ratio is just $0.65$. Similar observations can be made for other validation metrics, \textit{i.e.,} F$_1$ score, and precision, as shown in parts (a2)-(a3) of Fig.~\ref{memoryrw}, respectively. Next, parts (b1)-(b3) of Fig.~\ref{memoryrw} illustrate the impact of execution time on the validation metrics. The results depict that the execution time also follows a similar pattern as memory consumption, where the proposed work outperforms the existing methods and achieve maximal accuracy in minimum execution time. It requires $180$ ms to achieve the performance of more than $93$\%. It is because of the involvement of multiple teachers (teacher and trainee) and layer sharing in the proposed scheme. Last, we study the impact of simultaneous change in memory ratio and execution time on the validation metrics as shown in parts (c1)-(c3) of Fig.~\ref{memoryrw}. Similar as previous results, the results demonstrate that the proposed work can achieve accuracy of around $93\%$ when the execution time ($\beta$) is just $180$ ms, and memory ratio ($\alpha$) is $0.65$. 

\vspace{-.2cm}
\begin{figure}[h]
        \centering
\begin{tabular}{ccc}
\hspace{-.3cm}\includegraphics[scale=0.36]{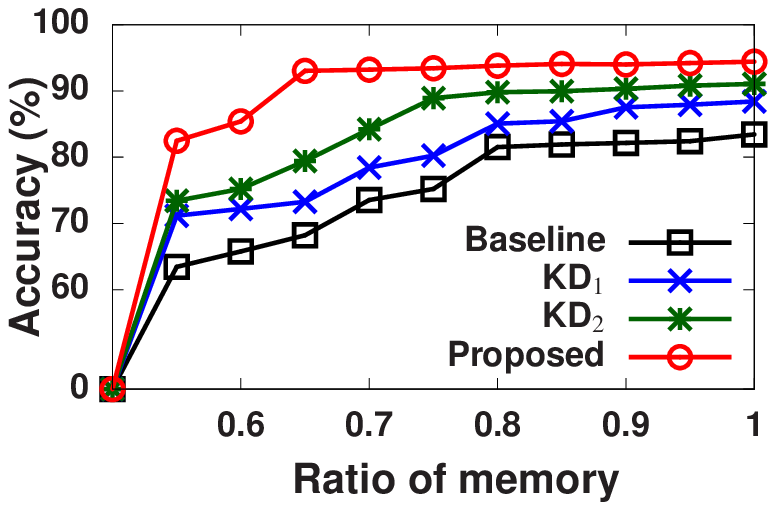}  & \hspace{-.3cm}\includegraphics[scale=0.36]{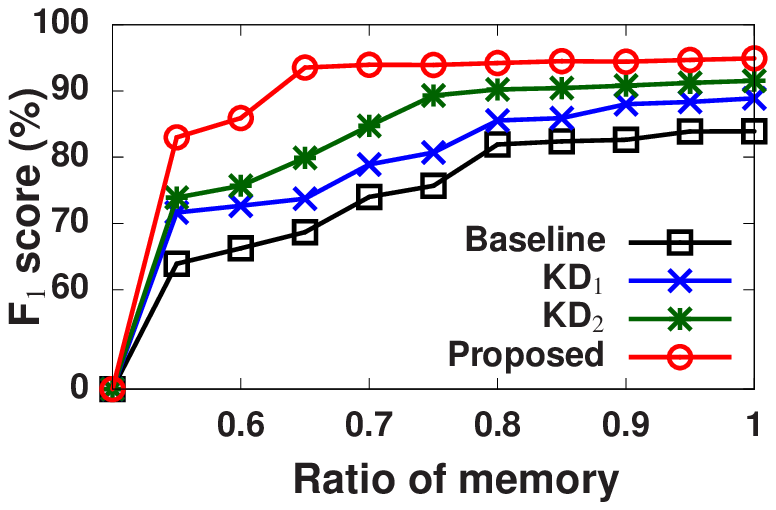}&\hspace{-.3cm}\includegraphics[scale=0.36]{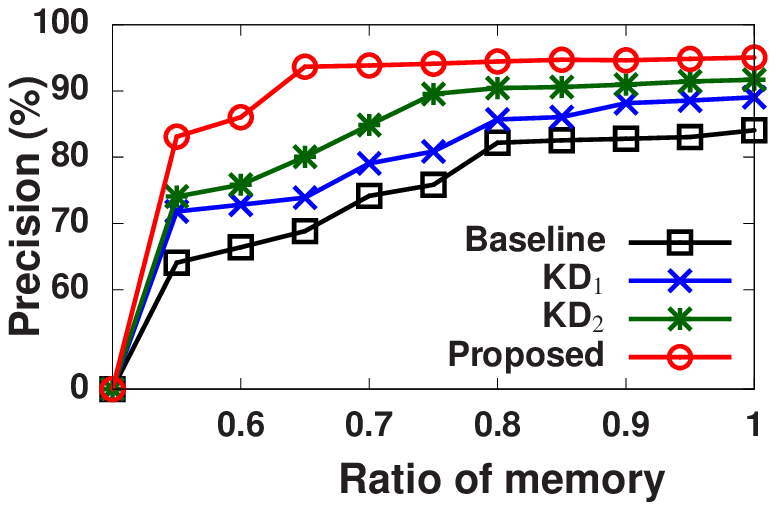}\vspace{-.1cm}\\
\scriptsize{(a1) Accuracy.} & \scriptsize{(a2) F$_1$ score.} & \scriptsize{(a3) Precision.} \\
\hspace{-.3cm}\includegraphics[scale=0.36]{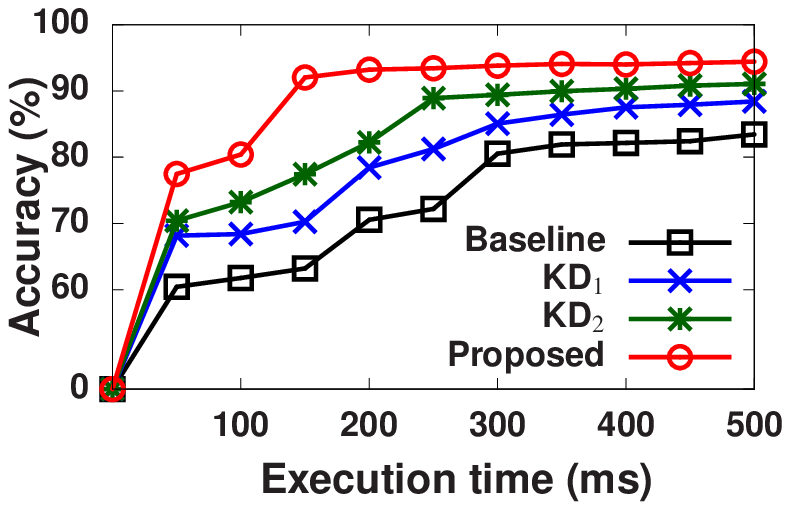}  & \hspace{-.3cm}\includegraphics[scale=0.36]{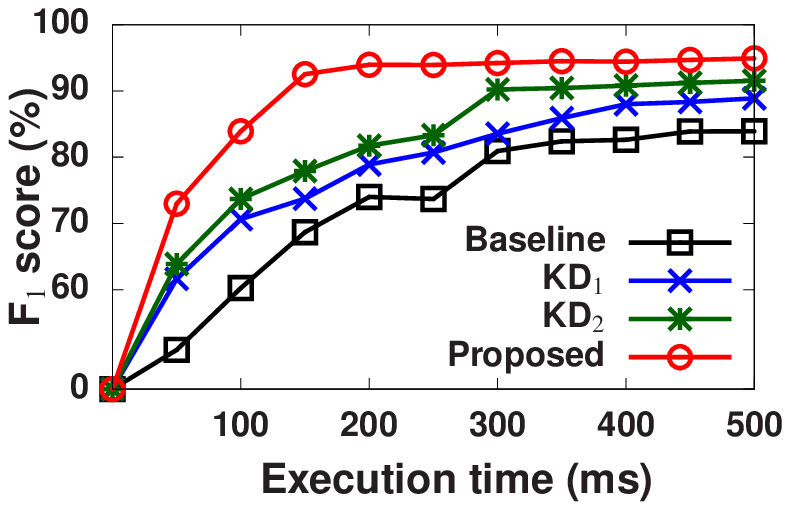}&\hspace{-.3cm}\includegraphics[scale=0.36]{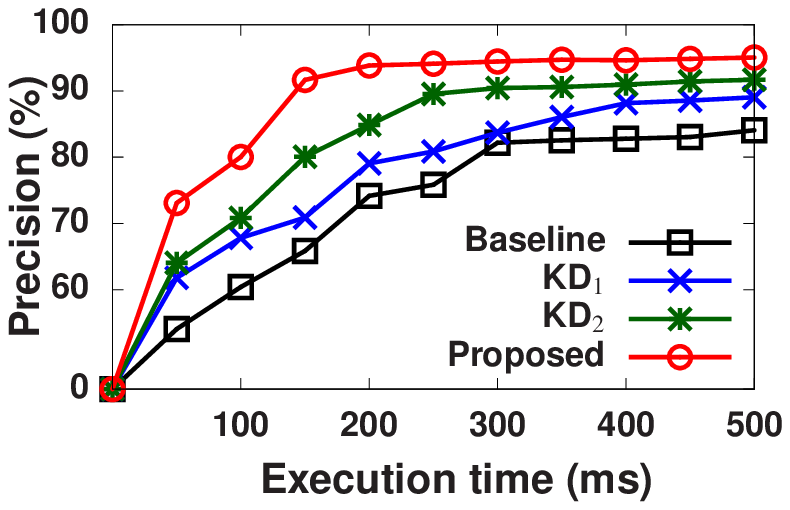}\vspace{-.1cm}\\
\scriptsize{(b1) Accuracy.} & \scriptsize{(b2) F$_1$ score.} & \scriptsize{(b2) Precision.}\\
\hspace{-.3cm}\includegraphics[scale=0.34]{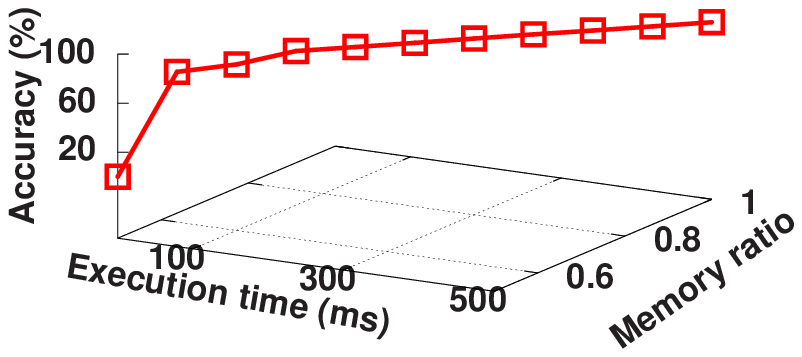}  & \hspace{-.3cm}\includegraphics[scale=0.34]{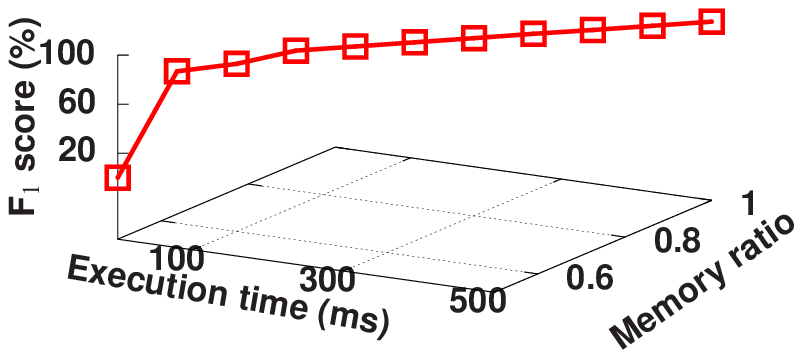}&\hspace{-.3cm}\includegraphics[scale=0.34]{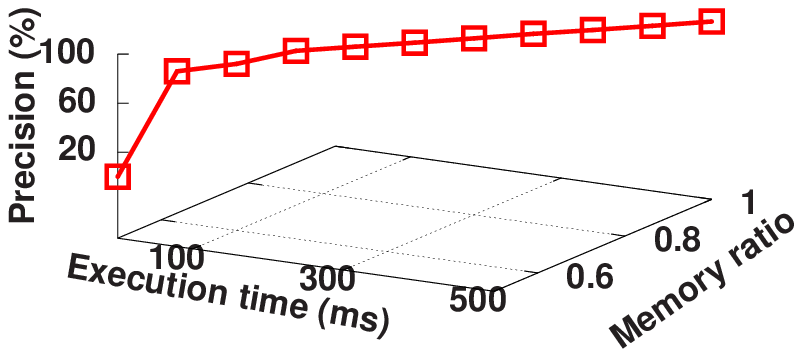} \vspace{-.1cm}\\
\scriptsize{(c1) Accuracy.} & \scriptsize{(c2) F$_1$ score.} & \scriptsize{(c3) Precision.} 
\end{tabular}
        \caption{Impact of memory, execution time, and simultaneous change in memory ratio and execution time on validation metrics.} 
        \label{memoryrw}
        \vspace{-.4cm}
\end{figure}

\subsection{Result 2: Class-wise accuracy} 
In this result, we fixed the value of edge constraints $\alpha=0.65$ and $\beta=180$ ms for estimating the class-wise accuracy of different methods that were considered for real world evaluation. Table~\ref{rwt1} illustrates the class-wise accuracy of different methods in the real world evaluation. We can observe from the result that the proposed method outperforms all existing methods in achieving accuracy against each class. Additionally, the class-wise accuracy of class $\mathbf{a_2}$ (bike) is highest, as it holds the most identifiable features in the dataset. Moreover, the number of instances for class $a_2$ is highest.

\begin{table}[h]
\centering
\caption{Confusion matrix of [Baseline,$\mathbf{KD}_1$,$\mathbf{KD}_2$,proposed] in $\%$.}
 \resizebox{.48\textwidth}{!}{
\begin{tabular}{p{0.1cm}cc@{}@{}c@{}@{}c@{}@{}c@{}@{}c@{}@{}c}
\hline
\multicolumn{8}{c}{\textbf{Predicted label}}                                                                                                                     \\
\multirow{7}{*}{\textbf{\rotatebox{90}{True label}}} &   & $\mathbf{a_1}$ & $\mathbf{a_2}$ & $\mathbf{a_3}$ & $\mathbf{a_4}$ &$\mathbf{a_5}$ & $\mathbf{a_6}$                 \\
& $\mathbf{a_1}$ & \cellcolor{blue!25}[66,70,73,89] &                   &                   &                   &                   &                   \\
& $\mathbf{a_2}$ &                   & \cellcolor{blue!25}[72,76,84,96] &                   &                   &                   &                   \\
& $\mathbf{a_3}$ &                   &                   & \cellcolor{blue!25}[70,71,80,94] &                   &                   &                   \\
& $\mathbf{a_4}$ &                   &                   &                   & \cellcolor{blue!25}[68,72,78,92] &                   &                   \\
& $\mathbf{a_5}$ &                   &                   &                   &                   & \cellcolor{blue!25}[68,75,79,91] &                   \\
& $\mathbf{a_6}$ &                   &                   &                   &                   &                   & \cellcolor{blue!25}[67,75,82,93]
\\ \hline \hline                                
\end{tabular}
}
\label{rwt1}
\vspace{-0.6cm}
\end{table}

\subsection{Result 3: Accuracy and F$_1$ with unseen class} 
Finally, we study the performance of the proposed scheme, when one class is unseen. In other words, we perform the leave-one-out test in this result. Fig.~\ref{unseen}(a) illustrates the accuracy and F$_1$ score achieved by the proposed scheme, where the instance of the given class is missing from the training dataset, and the ratio of memory consumed ($\alpha$) is $0.65$ . Here, we observe that the accuracy and F$_1$ score decreases when one class is missing. It is because, the built classifier does not hold the features associated with the missing class. Further, the impact of one unseen class varies over another because the number of data instance that generates the most identifiable feature by a classifier changes with the change in the unseen class. Similarly, Fig.~\ref{unseen}(b) illustrates the accuracy and F$_1$ score when instances of the given class are missing from the training dataset and execution time ($\beta$) is $180$ ms.

\begin{figure}[h]
        \centering
\begin{tabular}{cc}
\hspace{-.2cm}\includegraphics[scale=0.460]{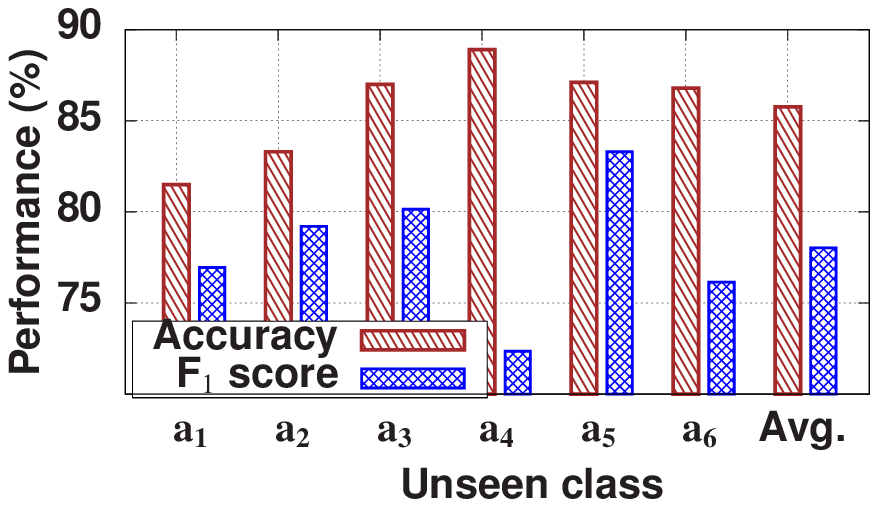}  & \includegraphics[scale=0.460]{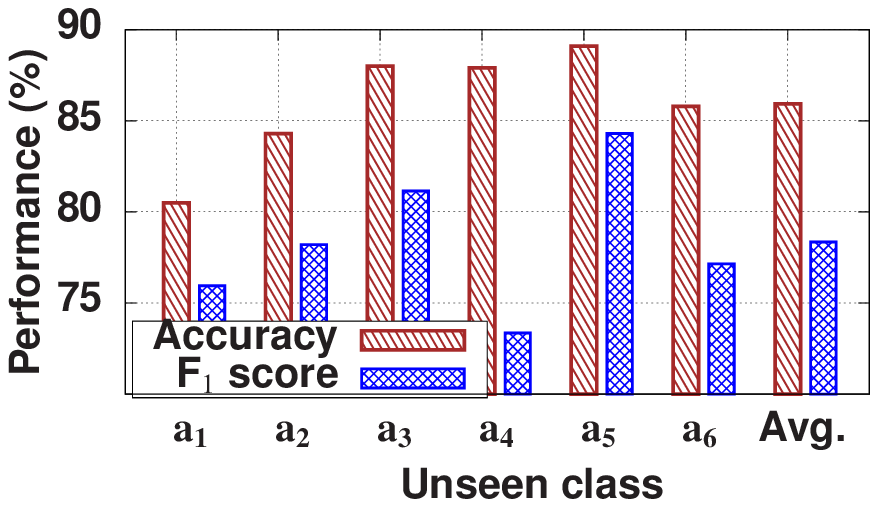}\\
\hspace{0.3cm}\scriptsize{(a) Ratio of memory ($\alpha=0.65$).} & \hspace{0.3cm} \scriptsize{(b) Execution time ($\beta=180$ ms).}
\end{tabular}
        \caption{Accuracy and F$_1$ score with one unseen class.} 
        \label{unseen}
        \vspace{-0.6cm}
\end{figure}

\section{Conclusion}\label{conclude_model}
In this paper, we proposed an approach to design and train a lightweight DNN using a large-size DNN, where trained lightweight DNN satisfied the $\alpha$ and $\beta$ constraints of the edge devices, acronymed as EarlyLight. The approach used optimal dropout selection and factorization for DNN compression. The EarlyLight approach also incorporated knowledge distillation to improve the performance of the lightweight DNN. Further, we introduced an early halting technique to train lightweight DNN, which saved resources; therefore, it speedups the training procedure. We also carried out several experiments to validate the effectiveness of the EarlyLight approach. The results showed that the approach achieved high accuracy on edge devices. This work provides a future direction towards developing a DNN compression technique that can also handle noise in the dataset due to faulty sensors.

\bibliographystyle{IEEEtran}
\bibliography{refer}

\begin{thebibliography}{10}
\providecommand{\url}[1]{#1}
\csname url@samestyle\endcsname
\providecommand{\newblock}{\relax}
\providecommand{\bibinfo}[2]{#2}
\providecommand{\BIBentrySTDinterwordspacing}{\spaceskip=0pt\relax}
\providecommand{\BIBentryALTinterwordstretchfactor}{4}
\providecommand{\BIBentryALTinterwordspacing}{\spaceskip=\fontdimen2\font plus
\BIBentryALTinterwordstretchfactor\fontdimen3\font minus
  \fontdimen4\font\relax}
\providecommand{\BIBforeignlanguage}[2]{{%
\expandafter\ifx\csname l@#1\endcsname\relax
\typeout{** WARNING: IEEEtran.bst: No hyphenation pattern has been}%
\typeout{** loaded for the language `#1'. Using the pattern for}%
\typeout{** the default language instead.}%
\else
\language=\csname l@#1\endcsname
\fi
#2}}
\providecommand{\BIBdecl}{\relax}
\BIBdecl

\bibitem{mishra2020teacher}
R.~Mishra, H.~P. Gupta, and T.~Dutta, ``Teacher, trainee, and student based
  knowledge distillation technique for monitoring indoor activities,'' in
  \emph{Proc. ACM SenSys}, 2020, pp. 729--730.

\bibitem{8440758}
A.~Sivanathan, H.~H. Gharakheili, F.~Loi, A.~Radford, C.~Wijenayake,
  A.~Vishwanath, and V.~Sivaraman, ``Classifying iot devices in smart
  environments using network traffic characteristics,'' \emph{IEEE Trans.
  Mobile Comput.}, vol.~18, no.~8, pp. 1745--1759, 2019.

\bibitem{9130098}
R.~Mishra, H.~P. Gupta, and T.~Dutta, ``A road health monitoring system using
  sensors in optimal deep neural network,'' \emph{IEEE Sensors Journal},
  vol.~21, no.~14, pp. 15\,527--15\,534, 2021.

\bibitem{6384464}
A.~{Sehgal}, V.~{Perelman}, S.~{Kuryla}, and J.~{Schonwalder}, ``Management of
  resource constrained devices in the internet of things,'' \emph{IEEE Comm.
  Mag.}, vol.~50, no.~12, pp. 144--149, 2012.

\bibitem{9599450}
X.~He, X.~Wang, Z.~Zhou, J.~Wu, Z.~Yang, and L.~Thiele, ``On-device deep
  multi-task inference via multi-task zipping,'' \emph{IEEE Trans. Mobile
  Comput.}, pp. 1--1, 2021, doi: \url{10.1109/TMC.2021.3124306}.

\bibitem{10.1145/3375877}
A.~Gupta, H.~P. Gupta, B.~Biswas, and T.~Dutta, ``A divide-and-conquer–based
  early classification approach for multivariate time series with different
  sampling rate components in iot,'' vol.~1, no.~2, pp. 1--21, 2020.

\bibitem{8949724}
A.~{Gupta}, H.~P. {Gupta}, B.~{Biswas}, and T.~{Dutta}, ``An early
  classification approach for multivariate time series of on-vehicle sensors in
  transportation,'' \emph{IEEE Transactions on Intelligent Transportation
  Systems}, pp. 1--12, 2020.

\bibitem{7994570}
N.~D. {Lane}, S.~{Bhattacharya}, A.~{Mathur}, P.~{Georgiev}, C.~{Forlivesi},
  and F.~{Kawsar}, ``Squeezing deep learning into mobile and embedded
  devices,'' \emph{IEEE Pervasive Comput.}, vol.~16, no.~3, pp. 82--88, 2017.

\bibitem{mishra2020survey}
R.~Mishra, H.~P. Gupta, and T.~Dutta, ``A survey on deep neural network
  compression: Challenges, overview, and solutions,'' \emph{arXiv preprint
  arXiv:2010.03954}, 2020.

\bibitem{hinton2015distilling}
G.~Hinton, O.~Vinyals, and J.~Dean, ``Distilling the knowledge in a neural
  network,'' \emph{arXiv preprint arXiv:1503.02531}, 2015.

\bibitem{9151346}
H.~{Zhao}, X.~{Sun}, J.~{Dong}, C.~{Chen}, and Z.~{Dong}, ``Highlight every
  step: Knowledge distillation via collaborative teaching,'' \emph{IEEE Trans.
  Cybern.}, pp. 1--12, 2020, doi: \url{10.1109/TCYB.2020.3007506}.

\bibitem{knowledge}
F.~Y. C. R. B. W. Z. X. . G.~K. Zhou, G., ``Rocket launching: A universal and
  efficient framework for training well-performing light net,'' in \emph{Proc.
  AAAI}, 2018, pp. 1--8.

\bibitem{zoph2016neural}
B.~Zoph and Q.~V. Le, ``Neural architecture search with reinforcement
  learning,'' \emph{arXiv preprint arXiv:1611.01578}, 2016.

\bibitem{9164991}
R.~Mishra, A.~Gupta, H.~P. Gupta, and T.~Dutta, ``A sensors based deep learning
  model for unseen locomotion mode identification using multiple semantic
  matrices,'' \emph{IEEE Trans. Mobile Comput.}, vol.~21, no.~3, pp. 799--810,
  2022.

\bibitem{xue2019deepfusion}
H.~Xue, W.~Jiang, C.~Miao, Y.~Yuan, F.~Ma, X.~Ma, Y.~Wang, S.~Yao, W.~Xu,
  A.~Zhang \emph{et~al.}, ``Deepfusion: A deep learning framework for the
  fusion of heterogeneous sensory data,'' in \emph{Proc. ACM Sensys}, 2019, pp.
  151--160.

\bibitem{yao2017deepsense}
S.~Yao, S.~Hu, Y.~Zhao, A.~Zhang, and T.~Abdelzaher, ``Deepsense: A unified
  deep learning framework for time-series mobile sensing data processing,'' in
  \emph{Proc. WWW}, 2017, pp. 351--360.

\bibitem{chen2019smartphone}
Z.~{Chen}, C.~{Jiang}, S.~{Xiang}, J.~{Ding}, M.~{Wu}, and X.~{Li},
  ``Smartphone sensor-based human activity recognition using feature fusion and
  maximum full a posteriori,'' \emph{IEEE Trans. Instrum. Meas.}, vol.~69,
  no.~7, pp. 3992--4001, 2020.

\bibitem{janakiraman2018explaining}
V.~M. Janakiraman, ``Explaining aviation safety incidents using deep temporal
  multiple instance learning,'' in \emph{Proc. ACM SIGKDD}, 2018, pp. 406--415.

\bibitem{noori2020human}
F.~M. Noori, M.~Riegler, M.~Z. Uddin, and J.~Torresen, ``Human activity
  recognition from multiple sensors data using multi-fusion representations and
  cnns,'' \emph{ACM Trans. Multimedia Comput. Commun. Appl.}, vol.~16, no.~2,
  pp. 1--19, 2020.

\bibitem{srivastava2014dropout}
N.~Srivastava, G.~Hinton, A.~Krizhevsky, I.~Sutskever, and R.~Salakhutdinov,
  ``Dropout: a simple way to prevent neural networks from overfitting,''
  \emph{J. Mach. Learn. Res.}, vol.~15, no.~1, pp. 1929--1958, 2014.

\bibitem{han2015learning}
S.~Han, J.~Pool, J.~Tran, and W.~Dally, ``Learning both weights and connections
  for efficient neural network,'' in \emph{Proc. NIPS}, 2015, pp. 1135--1143.

\bibitem{7837934}
I.~{Jindal}, M.~{Nokleby}, and X.~{Chen}, ``Learning deep networks from noisy
  labels with dropout regularization,'' in \emph{Proc. IEEE ICDM}, 2016, pp.
  967--972.

\bibitem{yao2017deepiot}
S.~Yao, Y.~Zhao, A.~Zhang, L.~Su, and T.~Abdelzaher, ``Deepiot: Compressing
  deep neural network structures for sensing systems with a compressor-critic
  framework,'' in \emph{Proc. ACM SenSys}, 2017, pp. 1--14.

\bibitem{10.5555/3305890.3305939}
D.~Molchanov, A.~Ashukha, and D.~Vetrov, ``Variational dropout sparsifies deep
  neural networks,'' in \emph{Proc. ICML}, 2017, pp. 2498--2507.

\bibitem{bhattacharya2016sparsification}
S.~Bhattacharya and N.~D. Lane, ``Sparsification and separation of deep
  learning layers for constrained resource inference on wearables,'' in
  \emph{Proc. ACM SenSys}, 2016, pp. 176--189.

\bibitem{chauhan2018performance}
J.~Chauhan, J.~Rajasegaran, S.~Seneviratne, A.~Misra, A.~Seneviratne, and
  Y.~Lee, ``Performance characterization of deep learning models for
  breathing-based authentication on resource-constrained devices,'' \emph{Proc.
  ACM Sensys}, vol.~2, no.~4, pp. 1--24, 2018.

\bibitem{gou2020knowledge}
J.~Gou, B.~Yu, S.~J. Maybank, and D.~Tao, ``Knowledge distillation: A survey,''
  \emph{arXiv preprint arXiv:2006.05525}, 2020.

\bibitem{ofa}
H.~Cai, C.~Gan, T.~Wang, Z.~Zhang, and S.~Han, ``Once-for-all: Train one
  network and specialize it for efficient deployment,'' in \emph{Proc. ICLR},
  2020, pp. 1--15.

\bibitem{gordon2018morphnet}
A.~Gordon, E.~Eban, O.~Nachum, B.~Chen, H.~Wu, T.-J. Yang, and E.~Choi,
  ``Morphnet: Fast \& simple resource-constrained structure learning of deep
  networks,'' in \emph{Proc. CVPR}, 2018, pp. 1586--1595.

\bibitem{dai2019chamnet}
X.~Dai, P.~Zhang, B.~Wu, H.~Yin, F.~Sun, Y.~Wang, M.~Dukhan, Y.~Hu, Y.~Wu,
  Y.~Jia \emph{et~al.}, ``Chamnet: Towards efficient network design through
  platform-aware model adaptation,'' in \emph{Proc. CVPR}, 2019, pp.
  11\,398--11\,407.

\bibitem{yao2018fastdeepiot}
S.~Yao, Y.~Zhao, H.~Shao, S.~Liu, D.~Liu, L.~Su, and T.~Abdelzaher,
  ``Fastdeepiot: Towards understanding and optimizing neural network execution
  time on mobile and embedded devices,'' in \emph{Proc. ACM SenSys}, 2018, pp.
  278--291.

\bibitem{luo2017thinet}
J.-H. Luo, J.~Wu, and W.~Lin, ``Thinet: A filter level pruning method for deep
  neural network compression,'' in \emph{Proc. ICCV}, 2017, pp. 5058--5066.

\bibitem{mishra2017apprentice}
A.~Mishra and D.~Marr, ``Apprentice: Using knowledge distillation techniques to
  improve low-precision network accuracy,'' in \emph{Proc. ICLR}, 2018, pp.
  1--17.

\bibitem{li2020few}
T.~Li, J.~Li, Z.~Liu, and C.~Zhang, ``Few sample knowledge distillation for
  efficient network compression,'' in \emph{Proc. CVPR}, 2020, pp.
  14\,639--14\,647.

\bibitem{yang2020mobileda}
J.~Yang, H.~Zou, S.~Cao, Z.~Chen, and L.~Xie, ``{MobileDA: Towards Edge Domain
  Adaptation},'' \emph{IEEE Internet Things J.}, vol.~7, no.~8, pp. 6909--6918,
  2020.

\bibitem{babu2020single}
R.~V. Babu \emph{et~al.}, ``Single-step adversarial training with dropout
  scheduling,'' \emph{arXiv preprint arXiv:2004.08628}, 2020.

\bibitem{lee2019neuro}
S.~Lee and S.~Nirjon, ``Neuro. zero: a zero-energy neural network accelerator
  for embedded sensing and inference systems,'' in \emph{Proc. ACM Sensys},
  2019, pp. 138--152.

\bibitem{liu2020layerwise}
X.~Liu, W.~Li, J.~Huo, L.~Yao, and Y.~Gao, ``Layerwise sparse coding for pruned
  deep neural networks with extreme compression ratio.'' in \emph{Proc. AAAI},
  2020, pp. 4900--4907.

\bibitem{elsken2019neural}
T.~Elsken, J.~H. Metzen, F.~Hutter \emph{et~al.}, ``Neural architecture search:
  A survey.'' \emph{J. Mach. Learn. Res.}, vol.~20, no.~55, pp. 1--21, 2019.

\bibitem{chen2018distilling}
Z.~Chen, L.~Zhang, Z.~Cao, and J.~Guo, ``Distilling the knowledge from
  handcrafted features for human activity recognition,'' \emph{IEEE Trans. Ind.
  Informat.}, vol.~14, no.~10, pp. 4334--4342, 2018.

\bibitem{zhou2016minimal}
G.-B. Zhou, J.~Wu, C.-L. Zhang, and Z.-H. Zhou, ``Minimal gated unit for
  recurrent neural networks,'' \emph{Int. J. Autom. Comput.}, vol.~13, no.~3,
  pp. 226--234, 2016.

\bibitem{shl2}
\BIBentryALTinterwordspacing
{SHL Challenge}, 2022. [Online]. Available:
  \url{http://www.shl-dataset.org/activity-recognition-challenge/}
\BIBentrySTDinterwordspacing

\bibitem{vdb}
\BIBentryALTinterwordspacing
Y.~G. Yong~Zhang, Junjie~Li, ``Vehicle driving behavior,'' 2022. [Online].
  Available: \url{http://dx.doi.org/10.21227/qzf7-sj04}
\BIBentrySTDinterwordspacing

\bibitem{thoreau}
\BIBentryALTinterwordspacing
{Water-to-Cloud}, 2022. [Online]. Available: \url{http://thoreau.uchicago.edu/}
\BIBentrySTDinterwordspacing

\bibitem{dbd}
\BIBentryALTinterwordspacing
A.~S. Yuksel, Asim~Sinan, ``{Driver Behavior Dataset},'' 2022. [Online].
  Available: \url{http://dx.doi.org/10.17632/jj3tw8kj6h.2}
\BIBentrySTDinterwordspacing

\end{thebibliography}


\begin{thebibliography}{1}
\providecommand{\url}[1]{#1}
\csname url@samestyle\endcsname
\providecommand{\newblock}{\relax}
\providecommand{\bibinfo}[2]{#2}
\providecommand{\BIBentrySTDinterwordspacing}{\spaceskip=0pt\relax}
\providecommand{\BIBentryALTinterwordstretchfactor}{4}
\providecommand{\BIBentryALTinterwordspacing}{\spaceskip=\fontdimen2\font plus
\BIBentryALTinterwordstretchfactor\fontdimen3\font minus
  \fontdimen4\font\relax}
\providecommand{\BIBforeignlanguage}[2]{{%
\expandafter\ifx\csname l@#1\endcsname\relax
\typeout{** WARNING: IEEEtran.bst: No hyphenation pattern has been}%
\typeout{** loaded for the language `#1'. Using the pattern for}%
\typeout{** the default language instead.}%
\else
\language=\csname l@#1\endcsname
\fi
#2}}
\providecommand{\BIBdecl}{\relax}
\BIBdecl

\bibitem{zoph2016neural}
B.~Zoph and Q.~V. Le, ``Neural architecture search with reinforcement
  learning,'' \emph{arXiv preprint arXiv:1611.01578}, 2016.

\bibitem{9164991}
R.~Mishra, A.~Gupta, H.~P. Gupta, and T.~Dutta, ``A sensors based deep learning
  model for unseen locomotion mode identification using multiple semantic
  matrices,'' \emph{IEEE Trans. Mobile Comput.}, vol.~21, no.~3, pp. 799--810,
  2022.

\bibitem{xue2019deepfusion}
H.~Xue, W.~Jiang, C.~Miao, Y.~Yuan, F.~Ma, X.~Ma, Y.~Wang, S.~Yao, W.~Xu,
  A.~Zhang \emph{et~al.}, ``Deepfusion: A deep learning framework for the
  fusion of heterogeneous sensory data,'' in \emph{Proc. ACM Sensys}, 2019, pp.
  151--160.

\bibitem{yao2017deepsense}
S.~Yao, S.~Hu, Y.~Zhao, A.~Zhang, and T.~Abdelzaher, ``Deepsense: A unified
  deep learning framework for time-series mobile sensing data processing,'' in
  \emph{Proc. WWW}, 2017, pp. 351--360.

\bibitem{chen2019smartphone}
Z.~{Chen}, C.~{Jiang}, S.~{Xiang}, J.~{Ding}, M.~{Wu}, and X.~{Li},
  ``Smartphone sensor-based human activity recognition using feature fusion and
  maximum full a posteriori,'' \emph{IEEE Trans. Instrum. Meas.}, vol.~69,
  no.~7, pp. 3992--4001, 2020.

\bibitem{janakiraman2018explaining}
V.~M. Janakiraman, ``Explaining aviation safety incidents using deep temporal
  multiple instance learning,'' in \emph{Proc. ACM SIGKDD}, 2018, pp. 406--415.

\bibitem{noori2020human}
F.~M. Noori, M.~Riegler, M.~Z. Uddin, and J.~Torresen, ``Human activity
  recognition from multiple sensors data using multi-fusion representations and
  cnns,'' \emph{ACM Trans. Multimedia Comput. Commun. Appl.}, vol.~16, no.~2,
  pp. 1--19, 2020.

\end{thebibliography}

%
%

\end{document}